\theoremstyle{plain}
\newtheorem{theorem}{Theorem}
\newtheorem*{theorem*}{Theorem}
\newtheorem{lemma}[theorem]{Lemma}
\newtheorem{proposition}[theorem]{Proposition}
\newtheorem{claim}[theorem]{Claim}
\theoremstyle{definition}
\newtheorem{definition}{Definition}
\newtheorem{remark}[definition]{Remark}
\Crefname{claim}{Claim}{Claims}
\newcommand{\NN}{\mathbb{N}}
\newcommand{\RR}{\mathbb{R}}
\newcommand{\MX}{\mathcal{X}}
\newcommand{\MI}{\mathcal{I}}
\newcommand{\st}{\star}
\DeclareMathOperator*{\argmin}{arg\,min} 
\DeclareMathOperator*{\argmax}{arg\,max}
\newcommand{\disj}{\mathsf{DISJ}}
\newcommand{\multiset}[1]{\{\!\!\{{#1}\}\!\!\}}
\newcommand{\fsym}{f^{\mathsf{sym}}}
\newcommand{\tfsym}{\tilde f^{\mathsf{sym}}}
\newcommand{\ar}[1]{\todo[color=blue!35]{AR: #1}}
\newcommand{\dhruv}[1]{\todo{[dr: #1]}}
\title{Towards characterizing the value of edge \\ embeddings in Graph Neural Networks}
\author{Dhruv Rohatgi\\ \small{MIT}\\\texttt{drohatgi@mit.edu} \and Tanya Marwah\\\small{CMU}\\\texttt{tmarwah@andrew.cmu.edu} \and Zachary Chase Lipton\\\small{CMU}\\\texttt{zlipton@cmu.edu} \and Jianfeng Lu\\\small{Duke University}\\\texttt{jianfeng@math.duke.edu} \and Ankur Moitra\\\small{MIT}\\\texttt{moitra@mit.edu} \and Andrej Risteski\\\small{CMU}\\\texttt{aristesk@andrew.cmu.edu}}
\begin{document}

\maketitle

\begin{abstract}
Graph neural networks (GNNs) are the dominant approach to solving machine learning problems defined over graphs. Despite much theoretical and empirical work in recent years, our understanding of finer-grained aspects of architectural design for GNNs remains impoverished. In this paper, we consider the benefits of architectures that maintain and update edge embeddings. On the theoretical front, under a suitable computational abstraction for a layer in the model, as well as memory constraints on the embeddings, we show that there are natural tasks on graphical models for which architectures leveraging edge embeddings can be much shallower. Our techniques are inspired by results on time-space tradeoffs in theoretical computer science.  Empirically, we show architectures that maintain edge embeddings almost always improve on their node-based counterparts---frequently significantly so in topologies that have ``hub'' nodes. 
\end{abstract}
\section{Introduction} 

Graph neural networks (GNNs) have emerged as the dominant approach for solving machine learning tasks on graphs. Over the span of the last decade, many different architectures have been proposed, both in order to improve different notions of efficiency, and to improve performance on a variety of benchmarks. Nevertheless, theoretical and empirical understanding of the impact of different architectural design choices remains elusive. 

One previous line of work \citep{xu2018powerful} has focused on characterizing the representational limitations stemming from the \emph{symmetry-preserving} properties of GNNs when the node features are not informative (also called ``anonymous GNNs'') --- in particular, relating GNNs to the Weisfeiler-Lehman graph isomorphism test \citep{leman1968reduction}. Another line of work \citep{oono2019graph} focuses on the potential pitfalls of the \emph{(over)smoothing effect} of deep GNN architectures, with particular choices of weights and non-linearities, in an effort to explain the difficulties of training deep GNN models. Yet another \citep{black2023understanding} focuses on training difficulties akin to vanishing introduced by \emph{``bottlenecks''} in the graph topology. 

In this paper, we focus on the benefits of maintaining and updating \emph{edge embeddings} over the course of the computation of the GNN. More concretely, a typical way to parametrize a layer $l$ of a GNN \citep{xu2018powerful} is to maintain, for each node $v$ in the graph, a node embedding $h_v^{(l)}$, which is calculated as 
\begin{equation}
    a_v^{(l+1)} = \mbox{AGGREGATE}\Bigl(h_u^{(l)}:u \in N_G(v)\Bigr) \hspace{1cm} h_v^{(l+1)} = \mbox{COMBINE}\Bigl(a_v^{(l+1)}, h_v^{(l)}\Bigr)
    \label{eq:nodempnn}
\end{equation}
where $N_G(v)$ denotes the neighborhood of vertex $v$. These updates can be viewed as implementing a (trained) message-passing algorithm, in which nodes pass messages to their neighbors, which are then aggregated and combined with the current state (i.e., embedding) of a node. The initial node embeddings $h_v^{(0)}$ are frequently part of the task specification (e.g., a vector of fixed features that can be associated with each node). When this is not the case, they can be set to fixed values (e.g., the all-ones vector) or random values.  

But a more expressive way to parametrize a layer of computation is to maintain, for each \emph{edge} $e$, an edge embedding $h_e^{(l)}$ which is calculated as:
\begin{equation} a_e^{(l+1)} = \mbox{AGGREGATE}\Bigl(h_a^{(l)}:a \in M_G(e)\Bigr) \hspace{1cm} h_e^{(l+1)} = \mbox{COMBINE}\Bigl(a_e^{(l+1)}, h_e^{(l)}\Bigr) \label{eq:edgempnn} \end{equation}
where $M_G(e)$ denotes the ``neighborhood'' of edge $e$: that is, all edges $a$ that share a vertex with $e$\footnote{The graph is assumed to be undirected, as is most common in the GNN literature.}. 

This paradigm is at least as expressive as \eqref{eq:nodempnn}: we can simulate a layer of \eqref{eq:nodempnn} by designating the embedding of an edge to be the concatenation of the node embeddings of its endpoints, and noticing that $M_G(e)$ includes all the neighbors of both endpoints of $e$. 
In particular, if a task has natural initial node embeddings, then their concatenations along edges can be used as initial edge embeddings. 
Additionally, there may be tasks where initial features are most naturally associated with edges 
(e.g., attributes of the relationship between two nodes) --- or the final predictions of the network are most naturally associated with edges (e.g., in link prediction, where we want to decide which potential links are true links).

GNNs that fall in the general paradigm of \eqref{eq:edgempnn} have been used for various applications \--- including link prediction \citep{cai2021line, liang2023line} as well as reasoning about relations between objects \citep{battaglia2016interaction}, molecular property prediction \citep{gilmer2017neural, choudhary2021atomistic}, and detecting clusters of communities in graphs \citep{chen2017supervised} \--- with robust empirical benefits. These approaches instantiate the edge-based paradigm in a plethora of ways. However, it is difficult to disentangle to what degree performance improvements come from added information from domain-specific initial edge embeddings, versus properties of the particular architectural choices for the aggregation functions in \eqref{eq:edgempnn}, versus inherent benefits of the edge-based paradigm itself (whether representational, or via improved training dynamics).

We focus on \emph{theoretically and empirically} quantifying the added \emph{representational} benefit from maintaining edge embeddings. Viewing the GNN as a computational model, we can think of the intermediate embeddings as a ``scratch pad''. Since we maintain more information per layer compared to the node-based paradigm \eqref{eq:nodempnn}, we might intuitively hope to be able to use a shallower edge embedding model. However, formally proving depth lower bounds both for general neural networks \citep{telgarsky2016benefits} and for specific architectures \citep{sanford2024representational, sanford2024transformers} frequently requires non-trivial theoretical insights \--- as is the case for our question of interest. In this paper, we show that:
\begin{itemize}[leftmargin=*]
\item 
\emph{Theoretically}, for certain graph topologies, edge embeddings can have substantial \emph{representational} benefits in terms of the depth of the model, when the amount of memory (i.e., total bit complexity) per node or edge embedding is bounded. Our results illuminate some subtleties of using particular lenses to understand design aspects of GNNs: for instance, we prove that taking memory into account reveals depth separations that the classical lens of invariance \citep{xu2018powerful} alone cannot.
\item \emph{Empirically}, when given the same input information, edge-based models almost always lead to performance improvements compared to their node-based counterpart --- and often by a large margin if the graph topology includes ``hub'' nodes with high degree.    
\end{itemize}


\section{Overview of results}

    \subsection{Representational benefits from maintaining edge embeddings.} 

    Our theoretical results elucidate the representational benefits of maintaining edge embeddings. More precisely, we show that there are natural tasks on graphs that can be solved by a \emph{shallow} model maintaining constant-size edge embeddings, but can only be solved by a model maintaining constant-size node embeddings if it is much \emph{deeper}.
    
    To reason about the impact of depth on the representational power of edge-embedding-based and node-embedding-based architectures, we introduce two \emph{local computation models}. In the node-embedding case, we assume each node of the graph $G$ supports a processor that maintains a state with a \emph{fixed amount of memory}. In one round of computation, each node receives messages from the adjacent nodes, which are aggregated by the node into a new state. In this abstraction, we think of the memory of the processor as the total bits of information each embedding can retain, and we think of one round of the protocol as corresponding to one layer of a GNN. The edge-embedding case is formalized in a similar fashion, except that the processors are placed on the edges of the graph, and two edge processors are ``adjacent'' if the edges share a vertex in common. In both cases, the input is distributed across the edges of the graph, and is only locally accessible. 
    
    With this setup in mind, our first result focuses on \emph{probabilistic inference} on graphs, specifically, the task of maximum a-posteriori (MAP) estimation in a pairwise graphical model on a graph $G = (V,E)$. For this task, given edge attributes describing the pairwise interactions $\phi_{\{a,b\}}$, the goal is to compute 
    $\argmax_{x \in \{0,1\}^V} p_{\phi}(x)$, where $p_{\phi}(x) \propto \exp\bigl(\sum_{\{a, b\} \in E} \phi_{\{a,b\}} (x_a, x_b)\bigr).$ 

    \begin{theorem*}[Informal] 
    Consider the task of using a GNN to calculate MAP (maximum a-posteriori) values in a pairwise graphical model, in which the pairwise interactions are given as input embeddings to a node-embedding or edge-embedding architecture. Then, there exists a graph with $O(n)$ vertices and edges, such that: 
    \begin{itemize}[leftmargin=*,itemsep=-0.1em]
        \item Any node message-passing protocol with $T$ rounds and $O(1)$ bits of memory per node processor requires $T = \Omega\left(\sqrt{n}\right)$.
    \item There is an edge message-passing protocol with $O(1)$ rounds and $O(1)$ bits of memory. 
    \end{itemize}

    \end{theorem*}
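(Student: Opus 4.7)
The plan is to exhibit a specific graph $G_n$ on $O(n)$ vertices and edges on which MAP encodes the set-disjointness problem $\disj$, and then to prove the two claims independently. A natural candidate is a ``three-layer'' graph: set $k = \sqrt n$, take input sets $L$ and $R$ of size $n$ each together with a middle set $C$ of $k$ ``cut'' vertices (plus possibly a single output vertex $o$ adjacent to $C$), with edges only between $L$ and $C$ and between $C$ and $R$ (and from $C$ to $o$). The pairwise potentials on $L$--$C$ edges encode Alice's $n$-bit input for $\disj$, those on $C$--$R$ edges encode Bob's input, and the potentials are chosen so that the MAP assignment at $o$ equals $\disj(A, B)$. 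By construction $G_n$ has a balanced vertex cut of size $O(\sqrt n)$ separating Alice's inputs from Bob's.

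For the edge MP upper bound, the key observation is that in the line graph of $G_n$ all edges sharing a common endpoint are mutually adjacent, so all edges incident to a middle vertex $c \in C$ can exchange messages in a single edge MP round. Since MAP on the three-layer graph decouples once the middle states $x_c$ are fixed, and $\disj$ is a Boolean disjunction admitting an idempotent (OR-based) aggregator, an $O(1)$-round protocol with $O(1)$ memory per edge suffices: one round to compute, per $c$, the indicator summarizing Alice's side of $c$; one round to combine with Bob's side of $c$; and one round to aggregate at $o$. Crucially, every message is a single bit, so no edge ever has to store more than $O(1)$ bits.

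For the node MP lower bound, I would reduce from the classical $\Omega(n)$ communication-complexity lower bound for $\disj$ (Kalyanasundaram--Schnitger, Razborov). Given any $T$-round node MP protocol $\Pi$ with $O(1)$ memory per node, Alice and Bob can cooperatively simulate $\Pi$: Alice maintains the states of the processors on $L \cup C$ and Bob maintains those on $C \cup R$. Because $L$ and $R$ only communicate through $C$, to consistently update the processors in $C$ each round it suffices to exchange the $O(1)$-bit states of the $k$ vertices in $C$, costing $O(k) = O(\sqrt n)$ bits per round. Thus $T$ rounds of $\Pi$ can be simulated by $O(T\sqrt n)$ bits of communication, and the $\Omega(n)$ communication lower bound for $\disj$ forces $T = \Omega(\sqrt n)$.

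The main technical hurdle is designing the potentials $\phi_e$ so that MAP on $G_n$ simultaneously (i) faithfully implements $\disj$, (ii) admits the OR-style, constant-memory edge MP protocol above (ruling out any encoding whose MAP requires counting or summing non-binary quantities), and (iii) does not admit a shortcut circumventing the cut bottleneck. The subtle point is arranging the ``pairing'' between corresponding Alice and Bob bits so it is computable locally at each $c \in C$ by edges that only store $O(1)$ bits --- this is where the disjunctive structure of $\disj$ is essential: the same fact that the answer is a disjunction of conjunctions is what makes OR aggregation in $O(1)$ bits suffice in the upper bound, and simultaneously guarantees that the $\sqrt n$-wide cut is the actual bandwidth bottleneck in the lower bound.
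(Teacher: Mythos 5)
There is a genuine gap, and it is in the lower bound. First, your two-party simulation does not go through as stated: exchanging only the $O(1)$-bit states of the $\sqrt{n}$ cut vertices $C$ each round does not allow either player to \emph{update} the processors in $C$, because in the node model the update rule $f_{t,c}$ at a cut vertex is an arbitrary function of the previous states of \emph{all} its neighbors (in $L$ and in $R$) and of the input potentials on \emph{all} adjacent edges, i.e.\ of private data held by both players. Computing such an arbitrary mixed function would require exchanging the relevant neighbor states and inputs, which is $\Theta(n)$ bits per round over all of $C$, and then the reduction from $\disj$ yields nothing. Second, and more fundamentally, the construction itself cannot support an $\Omega(\sqrt{n})$ bound once you insist (as your edge upper bound requires) that Alice's bit $i$ and Bob's bit $i$ are paired \emph{locally at the same middle vertex} $c$: in the model of this paper the inputs live on edges and every node may aggregate all adjacent inputs arbitrarily in each round, so each $c$ sees both players' bits for its block and can compute the block-level intersection indicator in a single round with one bit of memory; the vertex $o$ (and then every vertex) obtains the OR in $O(1)$ further rounds. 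Hence an $O(1)$-round, $O(1)$-memory \emph{node} protocol exists for your graph and task, so no $\Omega(\sqrt{n})$ lower bound can hold. A small balanced vertex cut separating the two input halves is not by itself a bottleneck when the paired inputs sit at distance two from each other.

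This is exactly the difficulty the paper's construction is designed around: the graph is a disjoint union of $\sqrt{n}$ paths of length $\sqrt{n}$ plus a \emph{single} hub vertex adjacent to everything, so the two pieces of information that must be combined (the external-field potential at the far end of a path and the MAP output at its near end) are at graph distance $\sqrt{n}$, and for $T<\sqrt{n}-1$ the only short route between them is through the one hub. The lower bound is then not a communication-complexity reduction at all but a direct information-flow counting argument (\cref{lemma:light-cone-lb}): deleting the bottleneck set $K=\{\text{hub}\}$, anything outside the radius-$(T-1)$ neighborhood of the output set $S$ can influence $S$ only via the at most $TB|K|$ bits ever stored at $K$, while choosing potentials that force each path to copy the field applied at its far endpoint shows the outputs on $S$ range over $2^{\sqrt{n}}$ values, giving $TB\ge\sqrt{n}$. (A set-disjointness reduction in the spirit of your proposal does appear in the paper, but only in \cref{app:cc}, for a different non-MAP function on the complete graph.) If you want to salvage your three-layer graph you would have to pair Alice's block at $c$ with Bob's block at a different middle vertex $c'$ and force that cross-block information through a single low-memory vertex, at which point you have essentially reconstructed the paper's hub-and-long-paths topology and its light-cone argument rather than a cut-based communication reduction.
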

    
    The proof techniques are of standalone interest: the lower bound on node message-passing protocols is inspired by tracking the ``flow of information'' in the graph, reminiscent of graph pebbling techniques used to prove time-space tradeoffs in theoretical computer science \citep{grigor1976application, abrahamson1991time}. The formal result is \cref{thm:hub-separation}, and the proof sketch is included in \cref{section:map-separation}. 
     
     \paragraph{The view from symmetry.} Above, we are not imposing any \emph{symmetry constraints}  \--- that is, invariance of the computation at a node or edge to its identity and the identities of its neighbors. Indeed, the edge message-passing protocol constructed above is highly non-symmetric. However, we show there is a (different, but also natural) task where \emph{even symmetric} edge message-passing protocols achieve a better depth/memory tradeoff than node message-passing protocols. We state the informal result below; the formal result is \cref{thm:memory-and-symmetry}.

    \begin{theorem*}[Informal]
    Let $n$ be a positive integer. There is a graph $G$ with $O(n)$ vertices and $O(n)$ edges, and a computational task on $G$, such that:
    \begin{itemize}[leftmargin=*,itemsep=-0.1em]
    \item Any node message-passing protocol with $T$ rounds and $O(1)$ bits of memory per node processor requires $T = \Omega(\sqrt{n})$ to solve this task.
    \item There is a symmetric edge message-passing protocol that solves this task with $O(1)$ rounds and $O(1)$ bits of memory.
    \end{itemize}
    \end{theorem*}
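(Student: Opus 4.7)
The plan is to exhibit a task and a graph for which the line-graph structure at a ``hub'' vertex gives symmetric edge protocols access to $\Theta(n)$ bits of aggregate memory at a bottleneck, while node protocols are stuck with $O(1)$ memory per round at that bottleneck. Concretely, I would take the hub graph from \cref{thm:hub-separation} (or a close variant): a central vertex $h$ of degree roughly $n$ whose neighbors each root a short stalk carrying input bits. The key line-graph property is that the $n$ edges incident to $h$ form a clique, so edge protocols can pool information across these $n$ edges in a single round of symmetric aggregation.

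For the upper bound, I would design a symmetric edge protocol that proceeds in two phases. First, each edge determines its structural role (hub-edge vs.\ stalk-edge) from its line-graph degree in $O(1)$ rounds, using only information invariant under relabeling. Second, a constant-round symmetric aggregation over the hub-edge clique produces a global aggregate of the inputs (e.g., a threshold count, an OR of per-stalk indicator functions, or similar). The task's answer is then a function of this aggregate, broadcast back to every edge. Symmetry is preserved throughout because the protocol distinguishes edges only by line-graph degree and aggregates inputs via permutation-invariant functions on multisets.

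For the lower bound on node protocols, I would reuse the information-flow / pebbling argument behind \cref{thm:hub-separation}. In the underlying graph, essentially all short paths between two stalks pass through $h$, so the rate of information transmission across $h$ is bounded by its memory, which is $O(1)$ bits per round. A careful embedding of a $\sqrt{n}$-bit communication problem --- such as set disjointness with $\sqrt{n}$-bit sides distributed across two halves of the stalks --- across the hub then forces any node protocol to use $\Omega(\sqrt{n})$ rounds. Because the embedding only has to distinguish between aggregated outcomes that are symmetric in the inputs (and therefore still computable by the edge protocol), the upper bound continues to go through while the node lower bound still binds.

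The main obstacle is calibrating the task so that it simultaneously (i) admits a symmetric, $O(1)$-memory-per-edge aggregation on the hub-edge clique (giving the edge upper bound), and (ii) reduces from a communication problem that cannot be compressed below $\Omega(\sqrt{n})$ bits (giving the node lower bound). These pull in opposite directions: to make the edge protocol succeed, the task should depend only on a symmetric summary of the inputs; to keep the node lower bound, this summary must still be uncomputable at $h$ with $O(1)$ memory in $o(\sqrt{n})$ rounds. The core technical work will be identifying a task in the intersection --- plausibly, a threshold or majority-type aggregate on inputs placed at the far ends of the stalks, for which streaming at $h$ cannot track the necessary partial counts but a clique of $\Theta(n)$ edges can collectively store and combine them.
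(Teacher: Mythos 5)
Your high-level framework is right (light-cone information-flow lower bound at a hub vertex, symmetric aggregation over a clique of incident edges), but the proposal leaves the hard part unsolved and the concrete candidates you offer for the task would actually fail. Specifically, you suggest a ``threshold or majority-type aggregate on inputs placed at the far ends of the stalks.'' The paper explicitly discusses why this does \emph{not} give a node lower bound: for binary inputs, a threshold/majority/histogram is a concise summary that the hub node can maintain in $O(\log n)$ bits and $O(1)$ rounds. So the lower bound collapses precisely for the tasks you propose. The tension you correctly identified between symmetric-aggregability and incompressibility is the crux, and it is not resolved by ``threshold or majority.'' Your fallback idea of embedding set disjointness has a different problem in the symmetric setting: on a star graph the two halves of the universe are not distinguishable in a symmetric protocol, so the reduction would need more care (and the paper uses disjointness only in the non-symmetric appendix result over the complete graph).

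What the paper actually does is different in two key respects. First, the topology is not the hub-plus-paths graph of \cref{thm:hub-separation}; it is a perfect $m$-ary tree of depth two (with $m=\lfloor\sqrt{n}\rfloor$), so each hub-edge has an attached subtree of $m$ edges. Second, the task is a \emph{collision-detection} function, not a threshold: over a large alphabet the task would be ``is my input value equal to the input on some other hub-edge,'' which is symmetric, computable in one round of clique aggregation, yet requires $\Omega(n)$ bits of information flow through the hub because the multiset of values has no small sufficient statistic. To get binary alphabet, the paper encodes each hub-edge's value in unary on the $m$ descendant edges and defines the function via the \emph{input summation} $C(I)_e=\sum_{e'\in M_G(e)}I(e')$ so that $g(I)_u$ tests whether more than $m+1$ neighbors of $\{0,u\}$ share the same summation. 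The lower bound then instantiates \cref{lemma:light-cone-lb} with $K=\{0\}$, $S=\{1,\dots,m/2\}$, and a choice of $I_F$ that makes $g_S$ realize all $2^{m/2}$ patterns, giving $TB=\Omega(\sqrt{n})$. That unary-encoding plus collision-count design is exactly the idea missing from your proposal, and without it the plan does not close.
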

    
    \paragraph{Importance of the memory lens.} The memory constraints are crucial for the results above. Without memory constraints, we can show that the node message-passing architecture can simulate the edge message-passing architecture, while only increasing the depth by $1$ (\cref{prop:node-edge-simulation}). Moreover, the \emph{symmetric} node message-passing architecture can simulate the \emph{symmetric} edge message-passing architecture, again while only increasing the depth by $1$.    
    We state the informal result below; the formal result is \cref{t:symmetryalone}.

    \begin{theorem*}[Informal] For any graph $G$, any symmetric edge message-passing protocol on $G$ with $T$ rounds can be represented by a symmetric node message-passing protocol with $T+1$ rounds.
    \end{theorem*}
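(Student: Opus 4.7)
The plan is to construct an explicit simulation in which each node $v$'s state after round $t \geq 1$ of the node protocol encodes the multiset $\multiset{h_e^{(t-1)} : e \ni v}$ of current edge-protocol states on the edges incident to $v$. The ``$+1$'' overhead arises from a single initial loading round that populates this multiset from the edge inputs, after which every subsequent node-protocol round symmetrically simulates one round of the edge protocol.

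Concretely, I would define the round-$1$ update so that $f_v^{(1)} = \multiset{h_e^{(0)} : e \ni v}$; this is symmetric and can be realized by having each neighbor $u$ contribute the initial embedding of the shared edge $\{v,u\}$ when it sends its message to $v$. For $t \geq 1$, the round-$(t+1)$ update at $v$ aggregates the multiset of neighbor states $\multiset{M_u : u \in N_G(v)}$ and combines it with $v$'s own current state $M_v = f_v^{(t)}$. Each element $h \in M_v$ corresponds to one of $v$'s incident edges $e$ with current state $h$; the update pairs $h$ with some multiset $M_u$ from the received aggregation that also contains $h$, then computes the simulated edge-protocol update $h \mapsto \mbox{COMBINE}(\mbox{AGGREGATE}((M_v - \multiset{h}) + (M_u - \multiset{h})), h)$, and assembles the results over all $h \in M_v$ into the new state $f_v^{(t+1)}$.

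The main obstacle is verifying that this update is a well-defined symmetric function of $(M_v, \multiset{M_u})$, despite the fact that the pairing between elements of $M_v$ and the containing $M_u$'s need not be unique when several incident edges share the same state. The key observation is that any two consistent pairings differ only by permuting duplicates: if incident edges $e_1 = \{v,u_1\}$ and $e_2 = \{v,u_2\}$ share state $h$, then both $M_{u_1}$ and $M_{u_2}$ contain $h$, and swapping their assignments in the pairing merely transposes the two resulting updated values, leaving the multiset $f_v^{(t+1)}$ invariant. Thus the update is a valid symmetric node-protocol update, and by induction on $t$ the invariant $f_v^{(t+1)} = \multiset{h_e^{(t)} : e \ni v}$ holds; after $T+1$ rounds the node protocol's states encode the edge protocol's states after $T$ rounds, completing the simulation.
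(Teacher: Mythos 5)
Your reduction to the invariant $f_v^{(t)} = \multiset{h_e^{(t-1)}: e \ni v}$ runs into a genuine gap at the step you flag as the ``main obstacle'': the claim that any two consistent pairings between elements of $M_v$ and neighbor multisets differ only by permuting duplicates is false. You only treat the case where two incident edges of $v$ carry the \emph{same} state $h$, but the dangerous ambiguity arises when they carry \emph{different} states that nevertheless appear in the wrong neighbors' multisets. Concretely, let $v$ have neighbors $u_1,u_2$ with $h_{\{v,u_1\}} = a$, $h_{\{v,u_2\}} = b$, $a \neq b$, and suppose $u_1$ has a further incident edge with state $b$ and another with state $c$, while $u_2$ has further incident edges with states $a$ and $d$. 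Then $M_v = \multiset{a,b}$, $M_{u_1} = \multiset{a,b,c}$, $M_{u_2} = \multiset{a,b,d}$, and the ``crossed'' pairing ($a$ with $M_{u_2}$, $b$ with $M_{u_1}$) is consistent in your sense but feeds the simulated update for the edge with state $a$ the neighborhood $\multiset{b,b,d}$ instead of the correct $\multiset{b,b,c}$ (and symmetrically for $b$). The two pairings are not related by transposing duplicates and in general produce different multisets, so your round-$(t+1)$ update is not a well-defined function of $(M_v,\multiset{M_u})$; and fixing an arbitrary canonical pairing breaks the simulation invariant, since the available data does not identify which element of $M_v$ (or of $M_u$) is the state of the specific edge $\{v,u\}$. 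A related, smaller issue is that your invariant discards the association between each incident edge's state and its input $I(e)$, which the symmetric edge update (Definition~\ref{def:symmetric-emp}) consumes at every round.

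The paper's proof is designed precisely to sidestep this matching problem. It first reduces, via \cref{lemma:univ-sym-protocol}, to the canonical symmetric edge protocol whose update functions are identities, so that the state of edge $e=\{u,v\}$ at time $t$ is literally the tuple $(I(e),P_{t-1}(e;I),\multiset{Q_{t-1}(u;I),Q_{t-1}(v;I)})$ with $Q_t(w;I)=\multiset{P_t(e';I): e' \in M_G(w)}$. The node protocol maintains $Q_t(\cdot;I)$, and because in the symmetric node model each neighbor's message arrives paired with the shared edge input $I(\{u,v\})$, node $u$ can \emph{recompute} $P_{t-1}(\{u,v\};I)$ from $I(\{u,v\})$ and the unordered pair of histories $\multiset{Q_{1:t-1}(u;I),Q_{1:t-1}(v;I)}$ (the earlier $Q_s$ being recoverable from $Q_{t-1}$ since the identity protocol accumulates its history). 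In other words, instead of trying to locate the current state of edge $\{u,v\}$ inside a multiset—which, as above, cannot be done unambiguously—the paper makes that state a deterministic function of data the node provably possesses. To repair your argument you would need some mechanism of this kind (e.g., strengthening the node state to something from which each incident edge's state can be reconstructed, rather than merely listed), at which point you essentially arrive at the paper's construction.
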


    We note that unlike prior work that focuses on understanding the representational power of GNN architectures under symmetry constraints \citep{xu2018powerful} --- which requires that the initial node features are the same for all nodes --- our simulation theorem above holds for arbitrary choices of initial node features. 
    
    We view this as evidence that many fine-grained properties of architectural design for GNNs cannot be adjudicated by solely considering them through the lens of symmetries of the network.


\subsection{Empirical benefits of edge-based architectures.}
 
The theory, while only characterizing representational power, suggests that architectures that maintain edge embeddings should have strictly better performance compared to their node embedding counterparts. We verify this in both real-life benchmarks and natural synthetic sandboxes.

First, we consider several popular GNN benchmarks (inspired by both predicting molecular properties, and image-like data), and show that equalizing for all other aspects of the architecture (e.g., depth, dimensionality of the embeddings) --- the accuracy the edge-based architectures achieve is at least as good as their node-based counterparts. Note, the goal of these experiments is \emph{not} to propose a new architecture --- there are already a variety of (very computationally efficient) GNNs that in some manner maintain edge embeddings. The goal is to confirm that --- all other things being equal --- the representational advantages of edge-based architectures do not introduce additional training difficulties. Details are included in \cref{s:benchmarks}.

Next, we consider two synthetic settings to stress test the performance of edge-based architectures. Inspired by the graph topology that provides a theoretical separation between edge and node-based protocols (\cref{thm:hub-separation} and \cref{thm:memory-and-symmetry}), we consider graphs in which there is a hub node, and tasks that are ``naturally'' solved by an edge-based architecture. Precisely, we consider a star graph, in which the labels on the leaves are generated by a ``planted'' edge-based architecture with randomly chosen weights. The node-based architecture, on the other hand, has to pass messages between the leaves indirectly through the center of the star. Empirically, we indeed observe that the performance of edge-based architectures is significantly better. Details are included in \cref{s:stargraph}.

Finally, again inspired by the theoretical setting in \cref{thm:hub-separation}, we consider probabilistic inference on \emph{tree graphs} --- precisely, learning a GNN that calculates node marginals for an Ising model, a pairwise graphical model in which the pairwise interactions are just the product of the end points. An added motivation for this setting is the fact that belief propagation --- a natural algorithm to calculate the marginals --- can be written as an edge-based message-passing algorithm. Again, empirically we see that edge-based architectures perform at least as well as node-based architectures. This advantage is maintained even if we consider ``directed'' versions of both architectures, in which case embeddings are maintained to be sent along each direction of the edge, and the message for the outgoing direction of an edge depends only on the embeddings corresponding to the incoming directions of the edges. Details are included in \cref{s:isingmodels}. 

\section{Related Works}
\paragraph{The symmetry lens on GNNs:} The most extensive theoretical work on GNNs has concerned itself with the representational power of different GNN architectures, while trying to preserve equivariance (to permuting the neighbors) of each layer. \citep{xu2018powerful} connected the expressive power of such architectures to the Weisfeiler-Lehman (WL) test for graph isomorphism. Subsequent works \citep{maron2019provably, zhao2021stars} focused on strengthening the representational power of the standard GNN architectures from the perspective of symmetries---more precisely, to simulate the $k$-WL test, which for $k$ as large as the size of the graph becomes as powerful as testing graph isomorphism. Our work suggests that this perspective may be insufficient to fully understand the representational power of different architectures.

\paragraph{GNNs as a computational machine:} Two recent papers \citep{loukas2019graph, loukas2020hard} considered properties of GNNs when viewed as ``local computation'' machines, in which a layer of computation allows a node to aggregate the current values of the neighbors (in an arbitrary fashion, without necessarily considering symmetries). Using reductions from the CONGEST model, they provide lower bounds on width and depth for the standard node-embedding based architecture. However, they do not consider architectures with edge embeddings, which is a focus of our work.  

\paragraph{Communication complexity methods to prove representational separations:} Tools from distributed computation and communication complexity have recently been applied not only to understand the representational power of GNNs
\citep{loukas2019graph, loukas2020hard}, but also the representational power of other architectures like transformers \citep{ sanford2024representational, sanford2024transformers}. In particular, \citep{sanford2024transformers} draws a connection between number of rounds for a MPC (Massively Parallel Computation) protocol, and the depth of attention-based architectures.  

\paragraph{GNNs for inference and graphical models:} The paper \citep{xu2023rethinking} considers the approximation power of GNNs for calculating marginals for pairwise graphical models, if the family of potentials satisfies strong symmetry constraints. They do not consider the role of edge embeddings or memory. 
    
\section{Setup}

 
\paragraph{Notation.} We will denote the graph associated with the GNN as $G = (V,E)$, denoting the vertex set as $V$ and the edge set as $E$. The graph induces adjacency relations on both edges and nodes, namely for $v,v' \in V$ and $e,e' \in E$, we have: $v\sim v'$ if $\{v,v'\} \in E$; $v \sim e$ if $e = \{u,v\}$ for some $u \in V$; and $e \sim e'$ if $e,e'$ share at least one vertex. For all graphs considered in this paper, we assume that $\{v,v\} \in E$ for all $v \in V$, so that adjacency is reflexive. 
We then define adjacency functions $N_G: V \cup E \to V$ and $M_G: V \cup E \to E$ as $N_G(a) := \{v \in V: a \sim v\}$ and $M_G(a) := \{e \in E: a \sim e\}$. 

\paragraph{Local memory-constrained computation.}
In order to reason about the required depth with different architectures, we will define a mathematical abstraction for one layer of computation in the GNN. We will define two models for local computation, one for each of the edge-embedding and node-embedding architecture. Unlike much prior work on GNNs and distributed computation, we will also have \emph{memory} constraints --- more precisely, we will constrain the bit complexity of the node and edge embeddings being maintained.

In both models, there is an underlying graph $G = (V,E)$, and the goal is to compute a function $g: \Phi^E \to \{0,1\}^V$, where $\Phi$ is the fixed-size \emph{input alphabet},
via several rounds of message-passing on the graph $G$. This domain of $g$ is $\Phi^E$ because in \emph{both} models, the inputs are given on the edges of the graph --- the node model will just be unable to store any \emph{additional} information on the edges. As we will see in \cref{section:map-separation}, this is a natural setup for probabilistic inference on graphs.

In both models, a protocol is parametrized by the number of rounds $T$ required, and the amount of memory $B$ required per local processor. For notational convenience, for $B \in \NN$ we define $\MX_B := \{0,1\}^B$, i.e. the length-$B$ binary strings. Recall that $N_G(v), M_G(v)$ denote the sets of vertices and edges adjacent to vertex $v$ in graph $G$, respectively.

\begin{definition}[Node message-passing protocol]
     Let $T,B \in \NN$ and let $G = (V,E)$ be a graph. A \emph{node message-passing protocol} $P$ on graph $G$ with $T$ rounds and $B$ bits of memory is a collection of functions $(f_{t,v})_{t \in [T], v \in V}$ where $f_{t,v}: \MX_B^{N_G(v)} \times \Phi^{M_G(v)} \to \MX_B$ for all $t,v$. 
    For an \emph{input} $I \in \Phi^E$, the \emph{computation} of $P$ at a round $t \in [T]$ is the map $P_t(\cdot;I): V \to \MX_B$ defined inductively by \[P_t(v;I) := f_{t,v}((P_{t-1}(v';I))_{v' \in N_G(v)}, (I(e))_{e \in M_G(v)})\] where $P_0 \equiv 0$. We say that $P$ \emph{computes} a function $g: \Phi^E \to \{0,1\}^V$ on inputs $\MI \subseteq \Phi^E$ if $P_T(v;I)_1 = g(I)_v$ for all $v\in V$ and all $I \in \MI$.
\label{d:nodeprotocol}
\end{definition}
In words, the value computed by vertex $v$ at round $t$ is some function of the previous values stored at the neighbors $v' \in N_G(v)$, as well as possibly the problem inputs on the edges adjacent to $v$ (i.e. $(I(e))_{e \in M_G(v)})$). Note that $P_t(v;I)$ may indeed depend on $P_{t-1}(v;I)$, due to our convention that $v \in N_G(v)$. We can define the edge message-passing protocol analogously:

\begin{definition}[Edge message-passing protocol] Let $T,B \in \NN$ and let $G = (V,E)$ be a graph. An \emph{edge message-passing protocol} $P$ on graph $G$ with $T$ rounds and $B$ bits of memory is a collection of functions $(f_{t,e})_{t\in[T],e\in E}$ where $f_{t,e}: \MX_B^{M_G(e)} \times \Phi \to \MX_B$ for all $t,e$, together with a collection of functions $(\tilde f_v)_{v \in [V]}$ where $\tilde f_v: \MX_B^{M_G(v)} \to \{0,1\}$. For an \emph{input} $I \in \Phi^E$, the \emph{computation} of $P$ at a timestep $t \in [T]$ is the map $P_t(\cdot;I): E \to \MX_B$ defined inductively by:
\[P_t(e;I) := f_{t,e}((P_{t-1}(e';I))_{e' \in M_G(e)}, I(e))\]
where $P_0 \equiv 0$. We say that $P$ \emph{computes} a function $g: \Phi^E \to \{0,1\}^V$ on inputs $\MI \subseteq \Phi^E$ if $\tilde f_v((P_T(e;I))_{e \in M_G(v)}) = g(I)_v$ for all $v\in V$ and all $I \in \MI$.
\end{definition}

\begin{remark} [Relation to distributed computation literature]
These models are very related to classical models in distributed computation like LOCAL \citep{linial1992locality} and CONGEST \citep{peleg2000distributed}. However, the latter models ignore memory constraints, so we cannot usefully port lower and upper bounds from this literature. 
\end{remark}
\begin{remark}[Computational efficiency]
In the definitions above, we allow the update rules $f_{t,v}, f_{t,e}$ to be arbitrary functions. In particular, a priori they may not be efficiently computable. However, our results showing a function can be implemented by an edge message-passing protocol (\cref{thm:hub-separation}, Part 2 and \cref{thm:memory-and-symmetry}, Part 2) in fact use simple functions (computable in linear time in the size of the neighborhood), implying the protocol can be implemented in parallel (with one processor per node/edge respectively) with parallel time complexity $O(TB \cdot \max_v |M_G(v)|)$. On the other hand, for the results showing a function cannot be implemented by a node message-passing protocol (\cref{thm:hub-separation}, Part 1 and \cref{thm:memory-and-symmetry}, Part 1), we prove an impossibility result for a \emph{stronger} model (one in which the computational complexity of $f_{t,v}$ is unrestricted) --- which makes our results only \emph{stronger}. 
\label{r:computational}
\end{remark}

\paragraph{Symmetry-constrained protocols.} Typically, GNNs are architecturally constrained to respect the symmetries of the underlying graph. Below we formalize the most natural notion of symmetry in our models of computation. Note, our abstraction of a round in the message-passing protocol generalizes the notion of a layer in a graph neural network---and the abstraction defined below correspondingly generalizes the standard definition of permutation equivariance \citep{xu2018powerful}.  We use the notation $\multiset{}$ to denote a multiset.

\begin{definition}[Symmetric node message-passing protocol]\label{def:symmetric-nmp}
A node message-passing protocol $P = (f_{t,v})_{t\in[T],v\in V}$ on graph $G = (V,E)$ is \emph{symmetric} if there are functions $(\fsym_t)_{t \in [T]}$ so that for every $t \in [T]$ and $v \in V$, the function $f_{t,v}$ can be written as:
\[f_{t,v}((c(v'))_{v' \in N_G(v)}, (I(e))_{e \in M_G(v)}) = \fsym_t(c(v), \multiset{(c(v'), I(\{v,v'\})): v' \in N_G(v)}).\]
\end{definition}

\begin{definition}[Symmetric edge message-passing protocol]\label{def:symmetric-emp}
An edge message-passing protocol $P = ((f_{t,e})_{t \in [T], e \in E}, (\tilde f_v)_{v \in V})$ on graph $G = (V,E)$ is \emph{symmetric} if there are functions $(\fsym_t)_{t \in [T]}$ and $\tfsym$ so that for every $t \in [T]$ and $e = \{u,v\} \in E$, the function $f_{t,e}$ can be written as:
\[f_{t,e}((c(e'))_{e' \in M_G(e)}, I(e)) = \fsym_t(I(e), c(e),\multiset{\multiset{c(\{u,v'\}): v' \in N_G(u)}, \multiset{c(\{u',v\}): u' \in N_G(v)}}),\]
and for every $v \in V$,  $\tilde f_v$ can be written as
$\tilde f_v((c(e))_{e \in M_G(v)}) = \tfsym(\multiset{c(e): e \in M_G(v)}).$
\end{definition}

\section{Depth separation between edge and node message passing protocols under memory constraints}\label{section:map-separation}

We will consider a common task in probabilistic inference on a \emph{pairwise graphical model}: calculating the MAP (maximum a-posterior) configuration.  

\begin{definition}[Pairwise graphical model]
For any graph $G = (V,E)$, the \emph{pairwise graphical model} on $G$ with potential functions $\phi_{\{a,b\}}: \{0,1\}^2 \to \RR$ is the distribution $p_\phi \in \Delta(\{0,1\}^V)$ defined as 
\[p_\phi(x) \propto \exp\left(-\sum_{\{a,b\} \in E} \phi_{\{a,b\}}(x_a,x_b)\right).\]
\end{definition}

\begin{definition}[MAP evaluation] Let $\Phi \subseteq \{\phi: \{0,1\}^2 \to \RR\}$ be a finite set of potential functions. A \emph{MAP (maximum a-posteriori) evaluator for $G$} (with potential function class $\Phi$) is any function $g: \Phi^E \to \{0,1\}^V$ that satisfies \[g(\phi) \in \argmax_{x \in \{0,1\}^V} p_\phi(x)\] for all $\phi \in \Phi^E$.
\label{d:mapeval}
\end{definition}

With this setup in mind, we will show that there exists a pairwise graphical model, and a local function class $\Phi$, such that an edge message passing protocol can implement MAP evaluation with a constant number of rounds and a constant amount of memory, while any node message protocol with $T$ rounds and $B$ bits of memory requires 
$TB = \Omega(\sqrt{|V|})$. Precisely, we show:  

\begin{theorem}[Main, separation between node and edge message-passing protocols]\label{thm:hub-separation}
Fix $n \in \NN$. There is a graph $G$ with $O(n)$ vertices and $O(n)$ edges, and a function class $\Phi$ of size $O(1)$, so that:
\begin{enumerate}[leftmargin=*]
    \item Let $g$ be any MAP evaluator for $G$ with potential function class $\Phi$. Any node message-passing protocol on $G$ with $T$ rounds and $B$ bits of memory that computes $g$ requires $TB \geq \sqrt{n}-1$.
    \item There is an edge 
    message-passing protocol $(f_{t,e})_{t,e}$ on $G$ with $O(1)$ rounds and $O(1)$ bits of memory that computes a MAP evaluator for $G$ with potential function class $\Phi$. Additionally, for all $t, e$, the update rule $f_{t,e}$ can be evaluated in $O(|M_G(e)|)$ time. 
\end{enumerate}
\end{theorem}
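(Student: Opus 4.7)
The proof splits into Part 2 (upper bound) and Part 1 (lower bound), which require distinct techniques. For Part 2, the plan is to exploit that all hub-adjacent edges share vertex $0$ and are therefore pairwise adjacent in the edge-MP model. In a single round, each hub edge can thus aggregate $O(1)$-bit Boolean summaries (e.g., ``does some hub edge carry potential $\phi$?'' for each $\phi \in \Phi$) from every other hub edge. Since $|\Phi| = O(1)$, these summaries carry $O(1)$ bits and jointly determine the optimal hub assignment $x_0^\star$ in the MAP configuration. A constant number of additional rounds propagates $x_0^\star$ to the path edges (each adjacent to two hub edges at its endpoints); each path edge then locally computes the MAP assignments at its own endpoints from $x_0^\star$ and its own potential. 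The vertex output function $\tilde f_v$ aggregates adjacent-edge states to produce the correct MAP bit, and each update rule is a simple aggregation over $M_G(e)$ evaluable in $O(|M_G(e)|)$ time.

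The lower bound in Part 1 is organized around the following factorization lemma: for any node-MP protocol with $T$ rounds and $B$ bits, and for every non-hub vertex $v \in V \setminus \{0\}$, the state $P_T(v;I)$ depends on $I$ only through (i) the inputs on edges within distance $T$ of $v$ in the hub-deleted subgraph $G' := G - \{0\}$ together with the hub edges incident to those vertices, and (ii) the hub's state trajectory $\tau(I) := (P_0(0;I),\ldots,P_{T-1}(0;I)) \in \MX_B^T$. The lemma is proved by induction on $T$: the hub is the unique $G$-neighbor of $v$ outside $G'$, so any cross-row information reaching $v$ must route through the hub and is thus mediated by $\tau$. Since $\tau$ takes values in $\MX_B^T$, its image has size at most $2^{TB}$.

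To use the lemma, I would construct an input family parameterized by $b \in \{0,1\}^{\sqrt{n}}$: for each row $j$, set the potential on a single path edge at row-distance greater than $T$ from the endpoint $v_j^\star := (\sqrt{n},j)$ to $\phi_{(1,1)}$ or $\phi_{(0,0)}$ according to $b_j$, while fixing all other row-$j$ path potentials to the equality potential $\phi_\neq$ and all hub potentials to $\phi_{\mathrm{zero}}$. A short case analysis verifies that the MAP at $v_j^\star$ is uniquely $b_j$, independent of the choices for other rows. Since the far edge encoding $b_j$ lies outside $v_j^\star$'s view in $G'$, the factorization lemma forces $\tau(I)$ to determine every $b_j$, so $\tau$ takes at least $2^{\sqrt{n}}$ distinct values, yielding $TB \geq \sqrt{n}$; the stated $TB \geq \sqrt{n} - 1$ then follows after accounting for the exact path length in $G'$.

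The main technical obstacle will be proving the factorization lemma cleanly: the inductive step must carefully distinguish information flowing through the hub (which must be absorbed into $\tau$) from information flowing within $G'$, despite the fact that every vertex is at $G$-distance $1$ from the hub. A secondary nuisance is verifying the hard input family---the potentials in $\Phi$ must suffice to implement the ``force $x_v = 0$'', ``force $x_v = 1$'', and ``force $x_u = x_v$'' constraints, and the MAP value at each row endpoint must be \emph{unique} so the argument applies to any MAP evaluator $g$. Both are routine case checks given the chosen $\Phi$, but they need to be written out explicitly for the lower bound to rule out every MAP evaluator rather than a single canonical one.
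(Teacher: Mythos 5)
Your Part 1 is essentially the paper's argument: your ``factorization lemma'' is exactly the paper's \cref{lemma:light-cone-lb} specialized to the bottleneck set $K=\{0\}$ (state of a non-hub vertex after $T$ rounds is determined by inputs in its radius-$T$ neighborhood of the hub-deleted graph plus the hub's trajectory, which ranges over at most $2^{TB}$ values), and your hard family (equality potentials along each row, a forcing potential on a far edge encoding $b_j$, zero potentials on hub edges) mirrors the paper's, with the same bookkeeping giving $TB\geq \sqrt n-1$. Modulo writing out the induction and the uniqueness-of-MAP check, that half is fine.

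Part 2, however, has a genuine gap: the protocol you describe does not compute a MAP evaluator. First, Boolean presence summaries of the form ``does some hub edge carry potential $\phi$'' do not determine the optimal hub value $x_0^\star$ --- the optimal $x_0$ depends on which rows carry which potentials and on cost comparisons aggregated over rows, not merely on which potential types occur. Second, and more fundamentally, the MAP value at a path vertex is \emph{not} a function of $x_0^\star$ and that vertex's incident potential: with zero potentials on the hub edges and equality potentials along a row, the MAP values on that row are dictated entirely by a forcing potential at the far end of the row, which your local rule never sees. (Indeed, your own hard family from Part 1 is a counterexample to your Part 2 protocol.) The paper's construction avoids this by \emph{not} compressing prematurely: in round 2 each hub edge $\{0,(i,j)\}$ stores the pair of inputs on itself and on the adjacent path edge $\{(i,j),(i+1,j)\}$ ($O(1)$ bits); in round 3, since the hub edges form a clique in the line graph, each hub edge's update function sees all of these pairs, i.e.\ the \emph{entire} input $I$, and directly evaluates $g_0(I)$ and $g_{(i,j)}(I)$, storing only those output bits. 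The memory bound constrains stored states, not the arguments of the update rule, so this is legal; and the claimed $O(|M_G(e)|)$ evaluation time is not automatic but requires the linear-time dynamic program for MAP on hub-plus-disjoint-paths (\cref{lemma:evaluate-g}), a step your proposal does not supply.
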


We provide a proof sketch of the main techniques here, and relegate the full proofs to Appendix~\ref{app:map-separation}. The graph $G$ that exhibits the claimed separation is a disjoint union of $\sqrt{n}$ path graphs, with an additional ``hub vertex'' that is connected to all other vertices in the graph (\cref{fig:path-graph}). The intuition for the separation is that MAP estimation requires information to disseminate from one end of each path to the other, and the hub node is a bottleneck for node message-passing but not edge message-passing. We expand upon both aspects of this intuition below.

\paragraph{Lower bound for node message-passing protocols:} Our main technical lemma for the first half of the theorem is \cref{lemma:light-cone-lb}. It gives a generic framework for lower bounding the complexity of any node message-passing protocol that computes some function $g$, by exhibiting a set of nodes $S \subset V$ where computing $g$ requires large ``information flow'' from distant nodes. More precisely, for any fixed set of ``bottleneck nodes'' $K$, consider the radius-$T$ neighborhood of $S$ when $K$ is removed from the graph. In any $T$-round protocol, input data from outside this neighborhood can only reach $S$ by passing through $K$. But the total number of bits of information computed by $K$ throughout the protocol is only $TB|K|$. This gives a bound on the number of values achievable by $g$ on $S$. We formalize this argument below:  

\begin{lemma}\label{lemma:light-cone-lb}
Let $G = (V,E)$ be a graph. Let $P$ be a node message-passing protocol on $G$ with $T$ rounds and $B$ bits of memory, which computes a function $g: \Phi^E \to \{0,1\}^V$. Pick any disjoint sets $K,S \subseteq V$. Define $H := G[\bar K], F := M_G(N_H^{T-1}(S))$. Then: 
\[TB \geq \frac{1}{|K|} \log \max_{I_F \in \Phi^F} \left|\left\{g_S\left(I_F, I_{\overline{F}}\right): I_{\overline{F}} \in \Phi^{\overline{F}}\right\}\right|.\]
\end{lemma}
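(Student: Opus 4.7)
The plan is a ``light-cone / information bottleneck'' argument: the set $K$ is a cut, and any input information on edges outside $F$ must traverse $K$ before influencing the values computed at $S$ by round $T$. Since $K$ stores only $TB|K|$ bits of information across the entire protocol, the output $g_S$ (for fixed inputs on $F$) can take at most $2^{TB|K|}$ distinct values. Taking logarithms yields the claim.

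\textbf{Main steps.} First, fix $I_F^\star \in \Phi^F$ attaining the maximum on the right-hand side. For each $I_{\bar F} \in \Phi^{\bar F}$, form the full input $I := (I_F^\star, I_{\bar F})$ and define the \emph{bottleneck transcript} $\sigma(I) := (P_t(k;I))_{t \in [T],\, k \in K} \in \MX_B^{T \times K}$. Since $|\MX_B| = 2^B$, there are at most $2^{TB|K|}$ possible transcripts.

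The core technical step is the following claim, proved by induction on $t$:
\begin{quote}
For every $t \in \{0, 1, \dots, T\}$ and every $v \in N_H^{T-t}(S)$, the value $P_t(v; I)$ is a function of $I_F^\star$ and $\sigma(I)$ alone (in particular, it does not depend on $I_{\bar F}$ except through $\sigma(I)$).
\end{quote}
The base case $t = 0$ is immediate from $P_0 \equiv 0$. For the inductive step at round $t$, fix $v \in N_H^{T-t}(S)$ and expand
\[P_t(v;I) = f_{t,v}\bigl((P_{t-1}(v'; I))_{v' \in N_G(v)},\, (I(e))_{e \in M_G(v)}\bigr).\]
The edge inputs are handled by noting $M_G(v) \subseteq M_G(N_H^{T-1}(S)) = F$, so $(I(e))_{e \in M_G(v)}$ is a function of $I_F^\star$. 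For the neighbor values, split $N_G(v) = (N_G(v) \cap K) \cup (N_G(v) \setminus K)$: the first set contributes $P_{t-1}(v';I)$-values which are entries of $\sigma(I)$ by definition; for each $v' \in N_G(v) \setminus K$, the edge $\{v,v'\}$ lies in $H = G[\bar K]$, so $v' \in N_H(v)$ and thus $v' \in N_H^{T-t+1}(S)$, at which point the inductive hypothesis applies.

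Setting $t = T$ gives that $P_T(v;I)$ is determined by $(I_F^\star, \sigma(I))$ for every $v \in N_H^0(S) = S$, and hence so is $g_S(I)$. Therefore the map $I_{\bar F} \mapsto g_S(I_F^\star, I_{\bar F})$ factors through $\sigma$, and its image has size at most $2^{TB|K|}$. Taking $\log$ and dividing by $|K|$ yields $TB \geq \tfrac{1}{|K|} \log |\{g_S(I_F^\star, I_{\bar F}) : I_{\bar F} \in \Phi^{\bar F}\}|$, and maximizing over $I_F^\star$ gives the stated bound.

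\textbf{Main obstacle.} The only subtle point is setting up the correct invariant for the induction. One must index the ``frontier'' by $N_H^{T-t}(S)$ (the neighborhood in $H$, not in $G$) so that the inductive propagation step $v' \in N_H(v) \Rightarrow v' \in N_H^{T-t+1}(S)$ goes through, which in turn crucially uses that both $v$ and $v'$ lie outside $K$. The definition $F = M_G(N_H^{T-1}(S))$ is then exactly what is needed to guarantee $M_G(v) \subseteq F$ along the entire induction. Once this bookkeeping is correct, the rest of the argument is essentially counting the possible transcripts.
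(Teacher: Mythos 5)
Your proof is correct and follows essentially the same argument as the paper's: an induction showing that values at vertices outside $K$ within the appropriate $H$-light-cone of $S$ are determined by the inputs on $F$ together with the bottleneck transcript $(P_t(k;I))_{t\in[T],k\in K}$, followed by counting the at most $2^{TB|K|}$ transcripts. The only (immaterial) difference is bookkeeping: you run the induction over the shrinking frontier $N_H^{T-t}(S)$, while the paper proves the statement for every $v\in V\setminus K$ with light-cone $M_G(N_H^{t-1}(v))$ and specializes to $S$ at the end.
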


\begin{proof}
First, we argue by induction that for each $t \in [T]$ and $v \in V \setminus K$, $P_t(v;I)$ is determined by $I_{M_G(N_H^{t-1}(v))}$ and $(P_\ell(k;I))_{\ell \in [t], k \in K}$. Indeed, by definition, $P_1(v;I)$ is determined by $I_{M_G^1(v)}$ for any $v \in V\setminus K$. For any $t > 1$ and $v \in V\setminus K$, $P_t(v;I)$ is determined by $(P_{t-1}(v';I))_{v' \in N_G(v)}$ and $(I(e))_{e \in M_G(v)}$. Note that $N_G(v) \subseteq N_H(v) \cup K$. Thus, using the induction hypothesis for each $v' \in N_H(v)$, we get that $(P_{t-1}(v';I))_{v' \in N_G(v)}$ is determined by $\bigcup_{v' \in N_H(v)} I_{M_G(N_H^{t-2}(v'))}$ and $(P_\ell(k;I))_{\ell \in [t], k \in K}$. So $P_t(v;I)$ is determined by $I_{M_G(N_H^{t-1}(v))}$ and $(P_\ell(k;I))_{\ell \in [t], k \in K}$, completing the induction.

Since $P$ computes $g$ and $S \subseteq V \setminus K$, we get that $g_S(I)$ is determined by $I_{M_G(N_H^{T-1}(S))} = I_F$ and $(P_\ell(k;I))_{\ell\in[T],k\in K}$. Thus, for any fixed $I_F \in \Phi_F$, we have
\[\left|\left\{g_S\left(I_F, I_{\overline{F}}\right): I_{\overline{F}} \in \Phi^{\overline{F}}\right\}\right| \leq \left|\left\{(P_\ell(k;(I_F,I_{\overline{F}})))_{\ell \in [T], k \in K}): I_{\overline{F}} \in \Phi^{\overline{F}}\right\}\right| \leq |\MX_B|^{T|K|} = 2^{TB|K|}.\]
The lemma follows.
\end{proof}

\begin{remark}
    The proof technique is inspired by and related to classic techniques (specifically, Grigoriev's method) for proving time-space tradeoffs for restricted models of computation like branching programs (\citep{grigor1976application}, see Chapter 10 in \cite{savage1998models} for a survey). There, one defines the ``flow'' of a function, which quantifies the existence of subsets of coordinates, such that setting them to some value, and varying the remaining variables results in many possible outputs. In our case, the choice of subsets is inherently tied to the topology of the graph $G$. Our technique is also inspired by and closely related to the ``light cone'' technique for proving round lower bounds in the LOCAL computation model \citep{linial1992locality}. However, our technique takes advantage of bottlenecks in the graph to prove stronger lower bounds (which would be impossible in the LOCAL model where memory constraints are ignored).
\end{remark}

The proof of Part 1 of \cref{thm:hub-separation} now follows from an application of \cref{lemma:light-cone-lb} with a particular choice of $K$ and $S$. Specifically, we choose $K$ to be the ``hub'' node (i.e. $K = \{0\}$) and $S$ to be the set of left endpoints of each path. To show that any MAP evaluator has large information flow to $S$ (in the quantitative sense of \cref{lemma:light-cone-lb}), it suffices to observe that in a pairwise graphical model on $G$ where a different external field is applied to the right endpoint of each path, and all pairwise interactions along paths are positive, the MAP estimate on each vertex in $S$ must match the external field on the corresponding right endpoint.

\paragraph{Upper bound for edge message-passing protocols:} The key observation for constructing a constant-round edge message-passing protocol for MAP estimation on $G$ is that all of the input data can be collected on the edges adjacent to the hub vertex. At this point, every such edge has access to all of the input data, and hence can evaluate the function. If $G$ were an arbitrary graph, this final step would potentially be NP-hard. However, since the induced subgraph after removing the hub vertex is a disjoint union of paths, in fact there is a linear-time dynamic programming algorithm for MAP estimation on $G$ (\cref{lemma:evaluate-g}). This completes the proof overview for \cref{thm:hub-separation}; we now provide the formal proof.

\begin{proof}[Proof of \cref{thm:hub-separation}]
Let $G$ be the graph on vertex set $V := \{0\} \cup [\sqrt{n}]\times [\sqrt{n}]$ with edge set defined below (see also \cref{fig:path-graph}):
\[E := \{\{0, (i,j)\}: i,j \in [\sqrt{n}]\} \cup \{\{(i,j),(i+1,j)\}: 2 \leq i \leq \sqrt{n}, 1 \leq j \leq \sqrt{n}\}.\]
Define
\[\Phi := \{(x_a,x_b) \mapsto \mathbbm{1}[x_a \neq x_b], (x_a,x_b) \mapsto \mathbbm{1}[x_a \neq 1 \lor x_b \neq 1], (x_a,x_b) \mapsto \mathbbm{1}[x_a \neq 0 \lor x_b \neq 0], (x_a,x_b) \mapsto 0\}.\]
First, let $g:\Phi^E \to \{0,1\}^V$ be any MAP evaluator for $G$ with potential function class $\Phi$, and consider any node message-passing protocol on $G$ with $T$ rounds and $B$ bits of memory that computes $g$. Let $K = \{0\}$ and $S = \{(1,j): j \in [\sqrt{n}]\}$. Suppose that $T \leq \sqrt{n}-2$. Let $F := M_G(N_H^{T-1}(S))$ and note that $\{(\sqrt{n}-1,j), (\sqrt{n},j)\} \not \in F$ for all $j \in [\sqrt{n}]$. Let $I_F: F \to \Phi$ be the mapping that assigns the function $(x_a,x_b) \mapsto 0$ to each edge $\{0,(i,j)\} \in F$ and $(x_a,x_b) \mapsto \mathbbm{1}[x_a \neq x_b]$ to each edge $\{(i,j),(i+1,j)\} \in F$. We claim that 
\[\left|\left\{g_S(I_F,I_{\overline F}): I_{\overline F} \in \Phi^{\overline{F}}\right\}\right| \geq 2^{\sqrt{n}}.\]
Indeed, for any string $y \in \{0,1\}^{\sqrt{n}}$, consider the mapping $I_{\overline F}: \overline{F} \to \Phi$ that assigns the function $(x_a,x_b) \mapsto \mathbbm{1}[x_a \neq y_j \lor x_b \neq y_j]$ to each edge $\{(\sqrt{n}-1,j),(\sqrt{n},j)\} \in F$, assigns $(x_a,x_b) \mapsto 0$ to each edge $\{0,(i,j)\} \in E\setminus F$, and assigns $(x_a,x_b) \mapsto \mathbbm{1}[x_a \neq x_b]$ to all remaining edges in $E\setminus F$. Then every minimizer of
\[\min_{x \in \{0,1\}^V} \sum_{\{a,b\} \in E} I_{\{a,b\}}(x_a,x_b)\]
satisfies $x_{(1,j)} = \dots = x_{(\sqrt{n},j)} = y_j$ for all $j \in [\sqrt{n}]$. Hence, $g_S(I_F,I_{\overline F}) = y$. Since $y$ was chosen arbitrarily, this proves the claim. But now \cref{lemma:light-cone-lb} implies that $TB \geq \sqrt{n}$.

We now construct an edge message-passing protocol $P$ on $G$ with $T = 3$ and $B = 4$. We (arbitrarily) identify $\Phi$ with $\{0,1\}^2$. For all $i,j \in \sqrt{n}$, define
\begin{align*}
f_{1,\{(i,j),(i+1,j)\}}(x, y) &:= y & \text{ if } i < \sqrt{n}\\
f_{2,\{0,(i,j)\}}(x, y) &:= (x_{\{(i,j),(i+1,j)\}}, x_{\{0,(i,j)\}}) & \text{ if } i < \sqrt{n} \\
f_{3,\{0,(i,j)\}}(x, y) &:= (g_0(J(x)), g_{(i,j)}(J(x)))
\end{align*}
where the second line is well-defined since edge $\{0,(i,j)\}$ is adjacent to both itself and edge $\{(i,j),(i+1,j)\}$; and in the third line the function is computing $g_0$ and $g_{(i,j)}$ on the input $J(x) \in \Phi^E$ defined as 
\[J(x)_e := \begin{cases} 
(x_{\{0,(k,\ell)\}})_{1:2} & \text{ if } e = \{(k,\ell),(k+1,\ell)\} \\
(x_{\{0,(k,\ell)\}})_{3:4} & \text{ if } e = \{0, (k,\ell)\}
\end{cases},
\]
where we use the notation $v_{a:b}$ for a vector $v$ and indices $a,b \in \NN$ to denote $(v_a,v_{a+1},\dots,v_b)$. Note that $J(x)$ is a well-defined function of $x$ for every edge $\{0,(i,j)\}$, because $\{0,(i,j)\} \sim \{0,(k,\ell)\}$ for all $i,j,k,\ell \in [n]$. Finally, define all other functions $f_{t,e}$ to compute the all-zero function, and define
\[\tilde f_v(x) := \begin{cases} 
(x_{\{0,(1,1)\}})_{1:2} & \text{ if } v = 0 \\ 
(x_{\{0, v\}})_{3:4} & \text{ otherwise}
\end{cases}.\]
This function is well-defined since $v=0$ is adjacent to edge $\{0,(1,1)\}$ and any vertex $v \in V\setminus\{0\}$ is adjacent to edge $\{0,v\}$.

Fix any $I \in \Phi^E$. From the definition, it's clear that $P_2(\{0,(i,j)\}; I) = (I_{\{(i,j),(i+1,j)\}}, I_{\{0,(i,j)\}})$ for all $I$ and $(i,j) \in [\sqrt{n}-1]\times[\sqrt{n}]$. Hence $J((P_2(e';I))_{e' \in M_G(e)})_e = I$ for all edges $e$ of the form $(0,\{i,j\})$, and so $P_3(\{0,(i,j)\};I) = (g_0(I), g_{(i,j)}(I))$ for all $(i,j) \in [\sqrt{n}]\times[\sqrt{n}]$. This means that $\tilde f_v((P_3(e;I))_{e \in M_G(v)}) = g(I)_v$ for all $v \in V$, so the protocol indeed computes $g$.

It remains to argue about the computational complexity of the updates $f_{t,e}$. It's clear that for all $e \in E$ and $t \in \{1,2\}$, the function $f_{t,e}$ can be evaluated in input-linear time. The only case that requires proof is when $t = 3$ and $e = \{0,(i,j)\}$ for some $i,j \in \sqrt{n}$. In this case $|M_G(e)| = \Theta(n)$, so it suffices to give an algorithm for evaluating the function $g: \Phi^E \to \{0,1\}^V$ on an explicit input $J$ in $O(n)$ time. This can be accomplished via dynamic programming (\cref{lemma:evaluate-g}).
\end{proof}

\begin{remark}
A quantitatively stronger (and in fact tight) separation is possible if one considers general tasks rather than MAP estimation tasks \--- see \cref{app:cc}.
\end{remark}

The separation discussed above crucially relies on the existence of a high-degree vertex in $G$. When the maximum degree of $G$ is bounded by some parameter $\Delta$, it turns out that any edge message-passing protocol can be simulated by a node message-passing protocol with roughly the same number of rounds and only a $\Delta$ factor more memory per processor. The idea is for each node to simulate the computation that would have been performed (in the edge message-passing protocol) on the adjacent edges. The following proposition formalizes this idea (proof in \cref{app:map-separation}):

\begin{proposition}\label{prop:node-edge-simulation}
Let $T, B \geq 1$. Let $G = (V,E)$ be a graph with maximum degree $\Delta$. Let $P$ be an edge message-passing protocol on $G$ with $T$ rounds and $B$ bits of memory. Then there is a node message-passing protocol $P'$ on $G$ that computes $P$ with $T+1$ rounds and $O(\Delta B)$ bits of memory.
\end{proposition}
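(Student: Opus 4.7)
The plan is to have each node $v$ of the simulating protocol $P'$ maintain, at round $t$, a tuple of ``pseudo edge-states'' $(c_v^{(t)}(e))_{e \in M_G(v)}$, with the invariant that $c_v^{(t)}(e) = P_t(e;I)$ for every edge $e$ incident to $v$. Since $|M_G(v)| \leq \Delta$ and each edge state uses $B$ bits, the tuple takes $O(\Delta B)$ bits to store, matching the claimed memory bound; the initialization $c_v^{(0)} \equiv 0$ is consistent with the edge-protocol convention $P_0 \equiv 0$.

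To maintain the invariant going from round $t$ to round $t+1$, node $v$ must compute $P_{t+1}(e;I) = f_{t+1,e}((P_t(e';I))_{e' \in M_G(e)}, I(e))$ for each edge $e = \{u,v\}$ incident to $v$. The key observation is that for $e = \{u,v\}$, we have $M_G(e) = M_G(u) \cup M_G(v)$: any edge sharing a vertex with $e$ is incident to $u$ or to $v$, and conversely. By the induction hypothesis, $v$'s own stored tuple (which it can read since $v \in N_G(v)$ by the reflexivity convention on adjacency) supplies the $P_t$-values on $M_G(v)$, while the tuple sent by each neighbor $u \in N_G(v)$ supplies the $P_t$-values on $M_G(u)$. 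The symbols $(I(e))_{e \in M_G(v)}$ are part of $v$'s local input in $P'$ by \cref{d:nodeprotocol}, so $v$ has all the information needed to recompute each coordinate of its tuple in a single node-protocol round, and we can wrap these $\Delta$ many coordinate-updates into one function $f'_{t+1,v}$.

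After $T$ simulated rounds, each node $v$ stores $(P_T(e;I))_{e \in M_G(v)}$ correctly, but per \cref{d:nodeprotocol} the node protocol is required to output $g(I)_v$ through the first bit of $v$'s final state. I would therefore spend one extra round (for a total of $T+1$) in which $v$ applies $\tilde f_v$ to its stored tuple and writes the resulting bit into the first coordinate of its new state, padding the rest arbitrarily within the $O(\Delta B)$ budget. The main (and essentially only) thing to verify carefully will be the identity $M_G(\{u,v\}) = M_G(u) \cup M_G(v)$, which drives the information flow; the rest is a routine induction on $t$ with straightforward bookkeeping on the sizes of the stored tuples.
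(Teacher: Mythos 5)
Your proposal is correct and follows essentially the same approach as the paper's proof: each node stores the tuple of states of its incident edges ($O(\Delta B)$ bits), the update relies on the identity $M_G(\{u,v\}) = M_G(u) \cup M_G(v)$, and one extra round handles the final aggregation $\tilde f_v$. The only (immaterial) difference is bookkeeping: the paper lets the node protocol lag one round behind the edge protocol and folds $\tilde f_v$ into the last update, whereas you simulate with no lag and devote round $T+1$ entirely to $\tilde f_v$; both yield $T+1$ rounds.
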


\section{Depth separation under memory and symmetry constraints}\label{section:new}

One drawback of the separation in the previous section is that the constructed edge protocol was highly non-symmetric, whereas in practice GNN protocols are typically architecturally constrained to respect the symmetries of the underlying graph. In this section we prove that there is a separation between the memory/round trade-offs for node and edge message-passing protocols even under additional symmetry constraints.

\begin{theorem}\label{thm:memory-and-symmetry}
Let $n \in \NN$. There is a graph $G = (V,E)$ with $O(n)$ vertices and $O(n)$ edges, and a function $g: \{0,1\}^E \to \{0,1\}^V$, so that:
\begin{enumerate}[leftmargin=*]
    \item Any node message-passing protocol on $G$ with $T$ rounds and $B$ bits of memory that computes $g$ requires $TB \geq \Omega(\sqrt{n})$.
    \item There is a symmetric edge message-passing protocol on $G$ with $O(1)$ rounds and $O(\log n)$ bits of memory that computes $g$. 
\end{enumerate}
\end{theorem}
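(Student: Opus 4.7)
The plan is to adapt the hub-and-paths graph from \cref{thm:hub-separation} by attaching pendant vertices so that paths become distinguishable under the automorphism group, thereby enabling a symmetric edge protocol to identify paths from purely local degree information. Specifically, take $G$ to be the same hub vertex $0$ with $m = \sqrt{n}$ disjoint paths of length $L = \sqrt{n}$, and then attach $2j$ pendants to the close endpoint $(1,j)$ and $2j{+}1$ pendants to the far endpoint $(L,j)$ of each path $j \in [m]$. The graph still has $O(n)$ vertices and edges, and now $\deg((1,j)) = 2j{+}2$ and $\deg((L,j)) = 2j{+}3$ uniquely encode both $j$ and whether the endpoint is close or far (by parity). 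Define $g$ by $g(I)_{(1,j)} = I(\{(L-1,j),(L,j)\})$ and $g(I)_v = 0$ at every other vertex $v$.

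The lower bound is a direct application of \cref{lemma:light-cone-lb} with $K = \{0\}$ and $S = \{(1,j): j \in [m]\}$: for $T \leq L-2$, the vertex $(L-1,j)$ lies at $H$-distance $L-2$ from $S$ in $H = G[\bar K]$, so the far edge $\{(L-1,j),(L,j)\}$ sits outside $F = M_G(N_H^{T-1}(S))$. Varying the $m$ far-edge bits produces $2^m$ distinct values of $g_S$, yielding $TB \geq m = \sqrt{n}$, and the regime $T > L-2$ gives $TB = \Omega(\sqrt{n})$ trivially.

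For the symmetric edge protocol I use three rounds and $O(\log n)$ bits of memory. In round $1$, each edge $e = \{u,v\}$ reads the unordered pair $\{\deg(u),\deg(v)\}$ from the sizes of its two all-zero neighbor multisets and stores $(I(e), \{\deg(u),\deg(v)\})$; any edge adjacent to $(1,j)$ or $(L,j)$ thereby recovers $j$ together with a close/far tag. In round $2$, each hub edge $\{0,(L,j)\}$ isolates the far path edge inside its round-$1$ multiset at $(L,j)$ (via its unique degree signature $\{3, 2j{+}3\}$), reads $y_j$ from its encoding, and records $(y_j, j, \text{far-hub-tag})$; the close hub edge $\{0,(1,j)\}$ meanwhile records $(j, \text{close-hub-tag})$. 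In round $3$, $\{0,(1,j)\}$ consults the multiset of round-$2$ states of all hub edges available through vertex $0$, pulls out the unique far-hub-tagged element whose $j$-field matches its own, and updates to $(y_j, j, \text{close-hub-tag})$. Finally, $\tfsym$ outputs the $y$-field of the close-hub-tagged state whenever its input multiset contains exactly one such state and the number of pendant-tagged entries equals twice its $j$-field, and outputs $0$ otherwise: at $(1,j)$ this yields $y_j$, and at every other vertex it yields $0$.

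The main obstacle is verifying that $\tfsym$, which by the symmetric-protocol definition must be a single function applied uniformly at every $v \in V$, correctly distinguishes the target vertices $(1,j)$ from the hub, from far endpoints $(L,j)$, from internal path vertices, and from pendant leaves --- all purely from multiset information. The $2j$-vs-$2j{+}1$ parity offset is chosen exactly so that the corresponding edge-state tags can split close from far symmetrically; without it, $d = 4$ (for instance) could mean close-$j{=}2$ or far-$j{=}1$. Secondary bookkeeping is to write each $\fsym_t$ explicitly as a multiset operation (e.g.\ ``extract the unique element with tag $X$ and $j$-field $k$'', ``count tag-$Y$ entries'') and to confirm that the stored words --- a path index bounded by $\sqrt{n}$, a single data bit, and $O(1)$ tags --- fit within the $O(\log n)$ bit budget.
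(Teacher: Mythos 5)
Your proposal is correct, but it reaches the theorem by a genuinely different route than the paper. The paper (\cref{section:new}) keeps a highly symmetric graph---a depth-two $m$-ary tree---and instead makes the \emph{function} symmetric: it encodes a large-alphabet value in unary on each subtree via the ``input summation'' $C(I)_e$ and asks for coincidence counts among the $2m+1$ edges at each child of the root, so that equivariance of $g$ under the tree's large automorphism group is what makes a symmetric edge protocol possible. You go the opposite way: you rigidify the graph from \cref{thm:hub-separation} by attaching $2j$ and $2j{+}1$ pendants to the two endpoints of path $j$, so that the automorphism group becomes trivial on the non-pendant part; then a \emph{symmetric} protocol (\cref{def:symmetric-emp}) can recover edge identities from purely local data---the endpoint-degree pair read off as the sizes of the two neighbor multisets in round one---and afterwards perform the same explicit bit-routing through the hub as the non-symmetric protocol of \cref{thm:hub-separation}, for an essentially trivial target function $g$. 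Both proofs use \cref{lemma:light-cone-lb} with $K$ the hub and $S$ the near endpoints, and both fit the $O(\log n)$ memory budget; your lower bound is if anything simpler, since $g_S$ literally reads off the $\sqrt n$ far-edge bits outside $F$. What the paper's construction buys is a function and graph that remain symmetric as objects (closer in spirit to ``anonymous'' GNN settings); what yours buys is conceptual economy---one construction serves both \cref{thm:hub-separation} and the symmetric statement, with symmetry obtained by degree-labelling rather than by redesigning the task. Two details you should pin down in a full write-up, both of which work out: the paper's reflexive-adjacency convention shifts all multiset sizes by one (harmless, but it changes the numerical degree signatures), and your $\fsym_t$ may hard-code graph-dependent constants such as the hub degree, which is legitimate since \cref{def:symmetric-emp} only requires the update to be uniform across edges; the unordered pair of endpoint multisets never causes ambiguity because every signature you match on (e.g.\ $\{3,2j{+}3\}$ at the far path edge, the far-hub tag with matching $j$-field) occurs at most once across the union of the two multisets.
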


For intuition, we start by sketching the proof of a relaxed version of the theorem, where the input alphabet is $[n]$ instead of $\{0,1\}$. We then discuss how to adapt the construction to binary alphabet. 

\paragraph{Large-alphabet construction.} Let $G = (V, E)$ be a star graph with root node $0$ and leaves $\{1,\dots,n\}$. We define a function $g: [n]^E \to \{0,1\}^V$ by $g(I)_v = 1$ if and only if there is some edge $e \neq \{0, v\}$ such that $I(e) = I(\{0,v\})$, i.e. the input on edge $\{0,v\}$ equals the input on some other edge. Since $g$ is defined to be equivariant to relabelling the edges, and all edges are incident to each other, it is straightforward to see that there is a symmetric one-round edge message-passing protocol that computes $g$ with $O(\log n)$ memory (in contrast, the edge message-passing protocol constructed in \cref{section:map-separation} was not symmetric, as it required that the edges incident to the high-degree vertex were labelled by which path they belonged to). However, there is no low-memory, low-round \emph{node} message-passing algorithm. Informally, this is because vertex $0$ is an information bottleneck, and $\Omega(n)$ bits of information need to pass through it. Similar to in \cref{section:map-separation}, this intuition can be made formal using \cref{lemma:light-cone-lb}.

\paragraph{Modifying for small alphabet.} The large alphabet size seems crucial to the above construction: if we were to naively modify the above construction so that each edge takes input in $\{0,1\}$ (without changing the graph topology or the function $g$), then there \emph{would} be a low-memory, low-round message-passing protocol, since the root node simply needs to compute the histogram of the leaves' inputs, which takes space $O(\log n)$. Each leaf node can use this information together with its own input value to compute its output. Essentially, there is no information bottleneck because there is a concise, sufficient ``summary" of the input data.

However, the above construction can in fact be adapted to work with binary alphabet, by modifying the graph topology. At a high level, for each leaf node $u$ in the above construction, we add $n$ descendants and encode the input that was originally on $u$ on the descendants of $u$, in unary. Of course, this new graph has $n^2$ nodes, so we must rescale parameters accordingly.

We now make this idea formal. For notational convenience, define $m = \lfloor \sqrt{n}\rfloor$. We define a graph $G = (V,E)$ that is a perfect $n$-ary tree of depth two. Formally, the graph $G$ has vertex set $V = \{0\} \cup [m] \cup ([m] \times [m])$. Vertex $0$ is adjacent to each $i \in [m]$, and each $i \in [m]$ is additionally adjacent to $(i,j)$ for all $j \in [m]$. We define a function $g: \{0,1\}^E \to \{0,1\}^V$ as follows. On input $I \in \{0,1\}^E$, for each edge $e \in E$, define the \emph{input summation} at $e$ to be \[C(I)_e := \sum_{e' \in M_G(e)} I(e').\] Intuitively, one may think of $C(I)_e$ as simulating the input on $e$ in the ``large alphabet'' construction described in \cref{section:new}. Next, define
\begin{align*}
g(I)_{(u,j)} &:= 0. \\ 
g(I)_u &:= \mathbbm{1}[\#|e \in M_G(\{0,u\}): C(I)_e = C(I)_{\{0,u\}}| > m+1]. \\ 
g(I)_0 &:= \mathbbm{1}[\exists u \in [m]: g(I)_u = 1].
\end{align*}

In words, $g(I)_u$ is the indicator for the event that, among the $2m+1$ edges adjacent to $\{0,u\}$ (which include $\{0,u\}$ itself), more than $m+1$ edges have the same input summation as $\{0,u\}$. At a high level, this definition of $g$ was designed to satisfy three criteria. First, $g(I)_u$ depends on the input values on other branches of the tree: in particular, if $I_{\{0,v\}} = 0$ for all $v \in [n]$, then $C(I)_e = C(I)_{\{0,u\}}$ for all edges $e$ in the subtree of $u$, so $g(I)_u$ exactly measures the event that there is \emph{at least one} edge $e$ outside the subtree of $u$ for which $C(I)_e = C(I)_{\{0,u\}}$. Second, there is no concise ``summary'' of $I$ such that $g(I)_u$ can be determined from this summary in conjunction with the inputs on the subtree of $u$. Third, $g(I)$ is equivariant to re-labelings of the tree. 

The first two criteria, together with the fact that the root vertex $0$ is an ``information bottleneck'' for $G$, can be used to show that any node message-passing algorithm that computes $g$ on $G$ requires either large memory or many rounds. The third criterion enables construction of a symmetric edge message-passing protocol for $g$. The arguments are formalized in the claims below.

\begin{claim}\label{claim:ms-lb}
For graph $G$ and function $g$ as defined above, any node message-passing protocol on $G$ that computes $g$ with $T$ rounds and $B$ bits of memory requires $TB \geq \Omega(m)$.
\end{claim}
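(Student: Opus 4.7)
The plan is to apply \cref{lemma:light-cone-lb} with $K = \{0\}$ (the root as the information bottleneck) and $S \subseteq [m]$ an arbitrary subset of size $k := \lfloor m/2 \rfloor$ of middle-layer vertices. Removing the root disconnects $G$ into $m$ disjoint stars of diameter $2$, so for every $T \geq 1$ the neighborhood $N_H^{T-1}(S)$ is contained in the union of the $|S|$ branches rooted at $S$; consequently $F = M_G(N_H^{T-1}(S))$ is (essentially) the edge set of those branches, and $\overline{F}$ is the edge set of the $m - k$ unchosen branches. The goal is then to exhibit a single $I_F$ under which the set $\{g_S(I_F, I_{\overline{F}}) : I_{\overline{F}} \in \{0,1\}^{\overline{F}}\}$ realizes all of $\{0,1\}^S$, yielding $\log 2^k = \Omega(m)$ and hence $TB \geq \Omega(m)$ by the lemma.

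To set this up I would first restrict attention to inputs in which every ``top'' edge $\{0,v\}$ (and every self-loop) carries input $0$. In this regime the input summations collapse: $C(I)_{\{0,v\}} = s_v$ and $C(I)_{\{u,(u,j)\}} = s_u$, where $s_v := \sum_j I(\{v,(v,j)\})$ is branch $v$'s leaf-sum. Plugging into the definition of $g$, the $m$ subtree edges $\{u,(u,j)\}$ all contribute matches (each having the same $C$ value as $\{0,u\}$), so the threshold $> m+1$ reduces exactly to the condition $\#\{v \in [m] : s_v = s_u\} \geq 2$; equivalently, in this regime $g(I)_u = \mathbbm{1}[\exists v \neq u : s_v = s_u]$. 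This is precisely the duplicate-detection function from the large-alphabet construction, now expressed in the binary alphabet via the summation trick.

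Enumerating $S = \{u_1,\dots,u_k\}$, I would then choose $I_F$ so that $s_{u_i} = i - 1$ for each $i$ (by setting exactly $i-1$ of the $m$ leaf edges of branch $u_i$ to $1$), making the leaf-sums on $S$ pairwise distinct and lying in $\{0,\dots,k-1\}$. For any target $b \in \{0,1\}^k$, I would build $I_{\overline{F}}$ by assigning the leaf-sums $\{s_v : v \notin S\}$ so that the set $\{s_v : v \notin S\} \cap \{0,\dots,k-1\}$ equals $\{i-1 : b_i = 1\}$, with the remaining branches set to the dummy value $m$. This uses at most $k$ meaningful branches plus fillers from a budget of $m - k \geq k$, so it is feasible. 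Since the $s_{u_i}$ are pairwise distinct and the dummies avoid $\{0,\dots,k-1\}$, the duplicate-detection formula yields $g(I)_{u_i} = b_i$ for every $i$, so every bit-vector in $\{0,1\}^S$ is achieved, as desired.

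The main obstacle is the case analysis that collapses the raw definition of $g$ --- which counts matching values of $C(I)_e$ among the $\Theta(m)$ edges adjacent to $\{0,u\}$ and compares to the threshold $m + 1$ --- into the clean duplicate-detection form above. The ``all top edges zero'' restriction is exactly what cancels the offset contributed by the $m$ subtree edges at $u$ and strips the definition down to a single comparison among leaf-sums $s_v$. Once this reduction is in place, the combinatorial choice of $I_{\overline{F}}$ realizing any target $b \in \{0,1\}^k$ is routine, and the bound $TB \geq k = \lfloor m/2 \rfloor = \Omega(m)$ follows from \cref{lemma:light-cone-lb} with $|K| = 1$.
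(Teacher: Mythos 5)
Your proposal is correct and follows essentially the same route as the paper's proof: the same application of \cref{lemma:light-cone-lb} with $K=\{0\}$ and $|S|\approx m/2$, the same ``all top edges zero'' restriction that collapses $g$ to duplicate-detection among the branch leaf-sums, and the same encoding of an arbitrary target in $\{0,1\}^S$ via distinct leaf-sums on $S$ and matching/non-matching sums on the remaining branches. The only differences are cosmetic (you use sums $0,\dots,k-1$ on $S$ with dummy value $m$, while the paper uses $1,\dots,m/2$ with $0$ as the non-matching value).
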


\begin{proof}
Consider any input $I \in \{0,1\}^E$ with $I(\{0,u\}) = 0$ for all $u \in [m]$. Then for any $u,j \in [m]$, we have \[C(I)_{\{u,(u,j)\}} = C(I)_{\{0,u\}} = \sum_{i=1}^m I(\{u,(u,i)\}).\] Thus $g(I)_u = 1$ if and only if there exists some $v \in [m]\setminus\{u\}$ with $C(I)_{\{0,u\}} = C(I)_{\{0,v\}}$, or equivalently $\sum_{i=1}^m I(\{u,(u,i)\}) = \sum_{i=1}^m I(\{v,(v,i)\})$. 

Fix $T,B$ and suppose that $P$ is a node message-passing protocol on $G$ that computes $g$ with $T$ rounds and $B$ bits of memory. Define sets of vertices $K := \{0\}$ and $S := \{1,\dots,m/2\}$. Let $H := G[\overline K]$ and $F := M_G(N^{T-1}_H(S))$. Then for any $T$, we have that \[F = \{\{0,u\}: 1 \leq u \leq m/2\} \cup \{\{u,(u,j)\}: 1 \leq u \leq m/2, 1 \leq j \leq m\}.\] Define a vector $I_F \in \Phi^F$ by 
\begin{align*}
I_{\{0,u\}} &= 0 \text{ for } 1 \leq u \leq m/2 \\ 
I_{\{u,(u,j)\}} &= \mathbbm{1}[j \leq u] \text{ for } 1 \leq u \leq m/2, 1 \leq j \leq m.
\end{align*}
Now fix any $x \in \{0,1\}^S$. We claim that there is some $I_{\overline F} \in \Phi^{\overline F}$ such that $g_S(I_F,I_{\overline F}) = x$. Indeed, let us define $I_{\overline F}$ by:
\begin{align*}
I_{\{0,v\}} &= 0 \text{ for } m/2 < v \leq m \\ 
I_{\{v,(v,j)\}} &= x_{v-m/2}\mathbbm{1}[j \leq v-m/2] \text{ for } m/2 < v \leq m, 1 \leq j \leq m.
\end{align*}
Then $C(I)_{\{0,u\}} = u$ for all $1 \leq u \leq m/2$, and $C(I)_{\{0,v\}} = (v-m/2) x_{v-m/2}$ for all $m/2 < v \leq m$. It follows that for any $1 \leq u \leq n/2$, $x_u = 1$ if and only if there exists some $v \in [m]\setminus u$ with $C(I)_{\{0,u\}} = C(I)_{\{0,v\}}$, and hence $x_u = g(I)_u$. We conclude that 
\[\left|\left\{g_S(I_F,I_{\overline F}): I_{\overline F} \in \Phi^{\overline F}\right\}\right| \geq 2^{m/2}.\]
Applying \cref{lemma:light-cone-lb} we conclude that $TB \geq \Omega(m)$ as claimed.
\end{proof}

\begin{claim}\label{claim:ms-ub}
For graph $G$ and function $g$ as defined above, there is a symmetric edge message-passing protocol on $G$ that computes $g$ with $O(1)$ rounds and $O(\log m)$ bits of memory.
\end{claim}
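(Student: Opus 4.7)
The plan is a three-round symmetric edge protocol: edges first cache their inputs, then compute the input summation $C(I)_e$, and finally (at edges of a distinguished ``type'') compute $g(I)_u$; the vertex readout $\tfsym$ is a simple OR. The key observation that reconciles this with the symmetry constraint is that both $\fsym_t$ and $\tfsym$ can recover endpoint/vertex degrees from the cardinalities of the multisets they receive, and in $G$ the three vertex degrees $m+1, m+2, 2$ (hub, middle, leaf) are pairwise distinct.

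Concretely, in round $1$ I would define $\fsym_1$ to return $I(e)$, so that $P_1(e;I) = I(e)$. In round $2$, for $e=\{u,v\}$ the function $\fsym_2$ receives $\multiset{M_1, M_2}$ with $M_1 = \multiset{I(e'): u \in e'}$ and $M_2 = \multiset{I(e'): v \in e'}$. On non-self-loops $e$ is the unique element of $M_1 \cap M_2$, so $C(I)_e = \mathrm{sum}(M_1) + \mathrm{sum}(M_2) - I(e)$; on self-loops we have $C(I)_e = \mathrm{sum}(M_1)$. The two cases are distinguishable symmetrically because in $G$ every non-self-loop satisfies $|M_1|\neq|M_2|$. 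So $\fsym_2$ sets the new state to $C(I)_e \in \{0,\dots,2m+2\}$, which takes $O(\log m)$ bits.

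In round $3$, $\fsym_3$ reads the multiset $\multiset{|M_1|, |M_2|}$ of endpoint degrees. Edges of the form $\{0,u\}$ are exactly those with profile $\multiset{m+1, m+2}$; for each such edge $\fsym_3$ counts $|\{e' \in M_G(e) : C(I)_{e'} = C(I)_e\}|$ symmetrically (subtracting one to account for double-counting $e$ itself in $M_1 \cup M_2$) and stores the indicator $g(I)_u$. All other edges store $0$. Finally, take $\tfsym$ to be the OR of its multiset argument. At the hub $0$ the incident states are $(g(I)_u)_{u\in[m]}$ together with the self-loop state $0$, whose OR is $g(I)_0$; at a middle node $u$ only the state on $\{0,u\}$ is nonzero, returning $g(I)_u$; at a leaf $(u,j)$ all incident states are $0$, matching $g(I)_{(u,j)}=0$.

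The step I expect to require the most care is verifying that every branching inside the symmetric update rules is driven only by symmetric quantities (multiset cardinalities and multiplicities), and in particular handling self-loops (where $M_1 = M_2$, so $\multiset{M_1, M_2}$ has one element of multiplicity two) without breaking the formula for $C(I)_e$. Once this is pinned down, the memory bound $O(\log m)=O(\log n)$ and the round count $3 = O(1)$ are immediate.
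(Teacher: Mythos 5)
Your protocol is correct and essentially the same as the paper's: cache the input, compute the input summation $C(I)_e$ via the (double-count-corrected) sums of the two endpoint multisets, compute the threshold indicator, and finish with an OR readout at each vertex. The only cosmetic difference is that the paper applies the round-three rule uniformly at every edge—leaf edges have only $m+1$ neighbors, so the ``strictly more than $m+1$'' test is automatically $0$ there—whereas you gate that computation to hub edges via the (symmetrically available) degree profile, which is also fine but unnecessary.
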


\begin{proof}
In the first round, each edge processor reads its input value. In the second round, each edge processor sums the values computed by all neighboring edges (including itself). In the third round, each edge processor computes the indicator for the event that strictly more than $m+1$ neighboring edges (including itself) have the same value as itself. In the final aggregation round, the output of a vertex is the indicator for the event that any neighbor has value $1$.

By construction, the value computed by any edge $e$ after the second round is exactly $C(I)_e$. Thus, after the third round, the value computed by any edge $\{0,u\}$ is exactly $g(I)_u$. Moreover, the value computed by any edge $\{u,(u,j)\}$ is $0$ after the third round, since such edges only have $m+1$ neighbors. It follows by construction of the final aggregation step that the protocol computes $g$.
\end{proof}

\begin{proof}[Proof of \cref{thm:memory-and-symmetry}]
Immediate from \cref{claim:ms-lb,claim:ms-ub}.
\end{proof}

\section{Symmetry alone provides no separation}\label{section:mapsymmetry}

In the previous sections we saw that examining \emph{memory constraints} yields a separation between different GNN architectures (whether or not we take symmetry into consideration). In this section, we consider what happens if we solely consider \emph{symmetry constraints} (that is, constraints imposed by requiring that the computation in a round of the protocol is invariant to permutations of the order of the neighbors). This viewpoint was initiated by \cite{xu2018powerful}, who showed that when the initial node features are uninformative (that is, the same for each node), a standard GNN necessarily outputs the same answer for two graphs that are 1-Weisfeiler-Lehman equivalent (that is, graphs that cannot be distinguished by the Weisfeiler-Lehman test, even though they may not be isomorphic). 

To be precise, we revisit the representational power of symmetric GNN architectures in the setting where the input features may be distinct and informative.
We show that \emph{if we remove the memory constraints} from \cref{section:map-separation}, but \emph{impose permutation invariance} for the computation in each round, any function that is computable by a $T$-layer edge message-passing protocol can be computed by a $(T+1)$-layer node message-passing protocol. Note that this statement is incomparable to \cref{prop:node-edge-simulation} because we impose constraints on symmetry, but remove constraints on memory. 



\begin{theorem}[No separation under symmetry constraints]
Let $T \geq 1$. Let $P$ be a symmetric edge message-passing protocol (Definition~\ref{def:symmetric-emp}) on graph $G = (V,E)$ with $T$ rounds. Then there is a $(T+1)$-round symmetric node message-passing protocol (Definition~\ref{def:symmetric-nmp}) $P'$ on $G$ that computes the same function as $P$.
\label{t:symmetryalone}
\end{theorem}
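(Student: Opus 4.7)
The plan is to construct $P'$ around a refinement sequence $N_t(v)$ that mirrors the Weisfeiler--Lehman coloring but retains edge inputs and a full history. I would set $N_0(v) := 0$ and for $t \geq 1$,
\[
N_t(v) := \Bigl(N_{t-1}(v),\, \multiset{(N_{t-1}(v'),\, I(\{v,v'\})) : v' \in N_G(v)}\Bigr),
\]
and arrange that the round-$t$ state of $P'$ at $v$ equals $N_t(v)$ for $t = 1, \dots, T$. This is immediate from the symmetric node-MPP update template: the pair (own state, multiset of (neighbor state, edge-input) pairs) is precisely what $\fsym_t$ receives at round $t$, so the first $T$ updates simply repackage their inputs. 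Packaging $N_{t-1}(v)$ explicitly inside $N_t(v)$ is a small design choice that sidesteps the question of whether the reflexive self-loop entry can be singled out from an anonymous multiset.

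The heart of the argument is an induction on $t$ showing that $P$'s round-$t$ edge states are determined by the $N_t$ values at the endpoints. Specifically, I would prove: for every $t \geq 0$ there exists $F_t$ such that for all $e = \{u,v\} \in E$,
\[
P_t(e;I) = F_t\Bigl(\multiset{N_t(u), N_t(v)},\, I(e)\Bigr),
\]
with $\multiset{N_t(u), N_t(v)}$ treated as an unordered pair (this is essential because the symmetric edge update treats the two endpoints interchangeably). The base case $t=0$ is trivial since $P_0 \equiv 0$. For the inductive step, writing $\sigma_{t-1}(w) := \multiset{P_{t-1}(\{w,w'\};I) : w' \in N_G(w)}$, I unfold the symmetric edge update
\[
P_t(e;I) = \fsym_t\Bigl(I(e),\, P_{t-1}(e;I),\, \multiset{\sigma_{t-1}(u), \sigma_{t-1}(v)}\Bigr),
\]
and verify that each argument is a function of $(\multiset{N_t(u), N_t(v)}, I(e))$: the second argument is handled by the inductive hypothesis plus recoverability of $N_{t-1}(w)$ from $N_t(w)$; the third is obtained by applying $F_{t-1}$ entry-wise to the inner multiset stored inside $N_t(w)$, which produces $\sigma_{t-1}(w)$ as a function of $N_t(w)$ alone.

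Given this claim at $t = T$, the $(T+1)$-th round of $P'$ computes the final output. From its own state $N_T(v)$ and the incoming multiset $\multiset{(N_T(v'), I(\{v,v'\})) : v' \in N_G(v)}$, the update evaluates
\[
\sigma_T(v) \;=\; \multiset{F_T(\multiset{N_T(v), N_T(v')},\, I(\{v,v'\})) : v' \in N_G(v)},
\]
applies $\tfsym$, and writes the resulting bit into the first coordinate of the state (padding arbitrarily). Since the edge MPP's output at $v$ is by definition $\tfsym(\sigma_T(v))$, $P'$ computes the same function as $P$ in $T+1$ rounds, and each of its updates is symmetric in the sense of \cref{def:symmetric-nmp} by construction.

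The main obstacle is the sub-claim that $\sigma_{t-1}(w)$ can be recovered from $N_t(w)$. The subtlety is that the WL-style multiset aggregates neighbors anonymously, so the individual assignment of neighbors to edge states is lost. This anonymity would normally be fatal, but it is rendered harmless by the fact that $F_{t-1}$ itself already operates on the unordered pair $\multiset{N_{t-1}(u), N_{t-1}(v)}$: applying $F_{t-1}$ entry-wise to the inner multiset of $N_t(w)$ produces exactly $\sigma_{t-1}(w)$ regardless of how entries are matched to neighbors. In this sense, the symmetry of the edge update is precisely what enables simulation by a symmetric (and therefore anonymous-by-design) node MPP, and the construction fails gracefully in the absence of either ingredient.
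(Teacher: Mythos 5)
Your argument is correct: the induction constructing decoders $F_t$ with $P_t(e;I) = F_t(\multiset{N_t(u),N_t(v)},\, I(e))$ goes through — applying $F_{t-1}$ entry-wise to the anonymous inner multiset of $N_t(w)$, together with the separately stored first component $N_{t-1}(w)$, does recover $\multiset{P_{t-1}(\{w,w'\};I): w' \in N_G(w)}$, and the extra round then reproduces $\tfsym(\multiset{P_T(e;I): e \in M_G(v)})$ and writes it into the first output bit, all within the symmetric node template of \cref{def:symmetric-nmp}. Your decomposition differs from the paper's. The paper first canonicalizes the \emph{edge} protocol: \cref{lemma:univ-sym-protocol} reduces to the case where $\fsym_t$ and $\tfsym$ are identity maps, and the simulating node protocol then tracks, at round $t+1$, the multiset $Q_t(v;I) = \multiset{P_t(e;I): e \in M_G(v)}$ of the universal edge states, recovering earlier rounds via projection maps $\gamma_s$ and unpacking $P_{t-1}(\{u,v\};I)$ from $(I(\{u,v\}), \multiset{Q_{1:t-1}(u;I),Q_{1:t-1}(v;I)})$. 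You instead canonicalize the \emph{node} side: your $N_t(v)$ is a protocol-independent Weisfeiler--Lehman-style refinement enriched with edge inputs, and all dependence on $P$ is pushed into the decoding functions $F_t$, so no analogue of \cref{lemma:univ-sym-protocol} is needed. These are dual bookkeepings of the same core insight — a symmetric edge update sees its endpoints only through an unordered pair, so anonymous node-side multisets lose nothing — and both pay one extra round for the final aggregation. Your version has the mild advantage of exhibiting a single universal node-side object from which every symmetric edge protocol's states are decodable; the paper's version keeps the simulated states semantically transparent (they are literally multisets of edge states) at the cost of the auxiliary reduction lemma.
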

\begin{remark}
    \cref{t:symmetryalone} and its proof are closely related to the fact that the $1$-Weisfeiler-Lehman test is equivalent to the $2$-Weisfeiler-Lehman test, which was reintroduced in the context of higher-order GNNs \citep{huang2021short}. However, the $k$-Weisfeiler-Lehman test only characterizes the representational power of $k$-GNNs with uninformative input features (i.e. that are identical for all nodes). \cref{t:symmetryalone} shows that even with arbitrary input features on the edges, the computation of a GNN with edge embeddings and symmetric updates can be simulated by a GNN with only node embeddings, without losing symmetry.
\end{remark}

To prove \cref{t:symmetryalone}, note that it suffices to simulate the protocol $P$ for which the update rules $\fsym, \tfsym$ in \cref{def:symmetric-emp} are identity functions on the appropriate domains. In order to simulate $P$, we construct a symmetric node message-passing protocol $P'$ for which the computation at time $t+1$ and node $v$ on input $I$ is the multiset of features computed by $P$ at time $t$ at edges adjacent to $v$: $Q_t(v;I) := \multiset{P_t(e;I): e \in M_G(v)}.$
This is possible since the computation of $P$ at time $t$ and edge $e = (u,v)$ is
$P_t(e;I) 
= (I(e),P_{t-1}(e;I),\multiset{Q_{t-1}(u;I),Q_{t-1}(v;I)}).$ 
The node message-passing protocol is tracking $Q_{t-1}(\cdot;I)$; moreover, it can recursively compute $P_{t-1}(e;I)$ using the same formula. See \cref{app:mapsymmetry} for the formal proof.

\section{Empirical benefits of edge-based architectures}
\label{sec:experiments}
In this section we demonstrate that the representational advantages the theory suggests are borne out by experimental evaluations, both on real-life benchmarks and two natural synthetic tasks we provide.  
Note that all the experiments were done 
on a machine with $8$ Nvidia A6000 GPUs.

\subsection{Performance on common benchmarks}
\label{s:benchmarks}
First we compare the performance of the most basic GNN architecture (Graph Convolutional Network, \cite{kipf2016semi}) with node versus edge embeddings. In the notation of \eqref{eq:nodempnn} and \eqref{eq:edgempnn}, the AGGREGATE and COMBINE operations are integrated as a transformation that looks like \cref{eq:nodegcn} or \cref{eq:edgegcn}:\footnote{This is the ``residual'' parametrization, which we use in experiments unless otherwise stated.} 
\begin{align}
    h^{(l+1)}_{v} &= h^{(l)}_{v} + \sigma\bigl(W^{(l)}  \mbox{MEAN}\bigl(h^{(l)}_{w}: w \in N_G(v) \setminus \{v\}  \bigr)\bigr) \label{eq:nodegcn}\\
    h^{(l+1)}_{e} &= h^{(l)}_{e} + \sigma\bigl(W^{(l)}  \mbox{MEAN}\bigl(h^{(l)}_{f}: f \in M_G(e) \setminus \{e\}  \bigr)\bigr) \label{eq:edgegcn}
\end{align}
for trained matrices $W^{(l)}$ and a choice of non-linearity $\sigma$. 
The only difference between these architectures is that in the latter case, the message passing happens over the \emph{line graph} of the original graph (i.e. the neighborhood of an edge is given by the other edges that share a vertex with it) --- thus, this can be viewed as an ablation experiment in which the only salient difference is the type of embeddings being maintained. To also equalize the information in the input embeddings, we only use the node embeddings in the benchmarks we consider: for the edge-based architecture \eqref{eq:edgempnn}, we initialize the edge embeddings by the concatenation of the node embeddings of the endpoints.  

In \cref{tab:benchmark-table}, we show that \emph{this single change} (without any other architectural modifications) uniformly results in the edge-based architecture at least matching the performance of the node-based architecture, sometimes improving upon it. \emph{Note, the purpose of this table is not to advocate a new GNN architecture}\footnote{In particular, the edge-based architecture is often much more computationally costly to evaluate.}---  
but to confirm that the increased representational power of the edge-based architecture indicated by the theory also translates to improved performance when the model is trained.  
For each benchmark, we follow the best performing training configuration as delineated in \citep{dwivedi2023benchmarking}.

\begin{table*}[th!]
   \centering
   \resizebox{\linewidth}{!}{
   \begin{tabular}{cccccc}
    \toprule
    \multirow{3}{*}{Model} & ZINC & MNIST & CIFAR-10 & Peptides-Func & Peptides-Struct 
    \\
    \cmidrule(lr){2-6}
    & MAE ($\downarrow)$ & ACCURACY ($\uparrow$) & ACCURACY ($\uparrow$) & AP ($\uparrow)$ & MAE ($\downarrow$) \\
    \midrule 
    GCN & $0.3430\pm 0.034$ & \bm{$95.29 \pm 0.163$} & $55.71 \pm 0.381$ & $0.6816\pm 0.007$ & $0.2453 \pm 0.0001$ \\
    
    Edge-GCN (Ours)& \bm{$0.3297\pm 0.011$} & $94.37 \pm 0.065$ & \bm{$57.44 \pm 0.387$} & \bm{$0.6867\pm 0.004$} & \bm{$0.2437 \pm 0.0005$} \\
    \bottomrule
   \end{tabular}
   }
    \caption{Comparison of node-based \eqref{eq:nodegcn} and edge-based \eqref{eq:edgegcn} GCN architectures across various graph benchmarks. The performance of the edge-based architecture robustly matches or improves the node-based architecture.}
    \label{tab:benchmark-table}
\end{table*}
\vspace{-0.3cm}
\subsection{A synthetic task for topologies with node bottlenecks}
\label{s:stargraph}

 The topologies of the graphs in \cref{thm:hub-separation} and \cref{thm:memory-and-symmetry} both involve a ``hub'' node, which is connected to all other nodes in the graph. Intuitively, in node-embedding architectures, such nodes have to mediate messages between many pairs of other nodes, which is difficult when the node is constrained by memory. To empirically stress test this intuition, we produce a synthetic dataset and train a GNN to solve a regression task on a graph with a fixed \emph{star-graph} topology---a simpler topology than the constructions in \cref{thm:hub-separation} and \cref{thm:memory-and-symmetry}---but capturing the core aspect of both. A star graph is a graph with a center node $v_0$, a set of $n$ leaf nodes $\{v_i\}_{i \in [n]}$, and edge set $\{\{v_0, v_i\}_{i \in [n]}\}$. A training point in the dataset is a list $(x_i,y_i)_{i=1}^n$ where $x_i$ is the \emph{input feature} and $y_i$ is the \emph{label} for leaf node $v_i$.

The input features are in $\mathbb{R}^{10}$, and sampled from a standard Gaussian. The labels $y_i$ are produced as outputs of a \emph{planted} edge-based architecture. Namely, for a standard edge-based GCN as in \eqref{eq:edgegcn}, we randomly choose values for the matrices $\{W_{i}\}_{i \in [k]}$ for some number of layers $k$, and set the labels to be the output of this edge-based GCN, when the initial edge features to the GCN are set as $h^{(0)}_{\{v_0, v_i\}} := x_i$, i.e. the input feature $x_i$ at the corresponding leaf $i$. 
In \cref{tab:stargraph-table-e}, we show the performance of edge-based and node-based architectures on this dataset, varying the number of leaves $n$ in the star graph and the depth $k$ of the planted edge-based model. In each case, the numbers indicate RMSE of the best-performing edge-based and node-based architecture, sweeping over depths up to 10 ($2 \times$ the planted model), widths $\in \{16, 32, 64\}$, and a range of learning rates.

Since the planted edge-based model satisfies both \emph{invariance} constraints (by design of the GCN architecture) and \emph{memory} constraints (since the planted model maintains 10-dimensional embeddings), we view these results as empirical corroboration of \cref{thm:memory-and-symmetry}---and even for simpler topologies than the proof construction.  

\begin{table}[htbp]
\centering
\sisetup{table-format=1.4}
\begin{tabular}{@{}r*{6}{S[table-format=1.4]}@{}}
\toprule
& \multicolumn{6}{c}{Depth of Planted Model (RMSE)} \\
\cmidrule(l){2-7}
\multirow{2}{*}{\makecell[r]{Number of\\Leaves}} & \multicolumn{2}{c}{5} & \multicolumn{2}{c}{3} & \multicolumn{2}{c}{1} \\
\cmidrule(lr){2-3} \cmidrule(lr){4-5} \cmidrule(l){6-7}
& {Edge} & {Node} & {Edge} & {Node} & {Edge} & {Node} \\
\midrule
64 & 0.004  & 0.3790  & 0.011 & 0.3596 & 0.008 & 0.3752 \\
32 & 0.003 & 0.3664 & 0.005 & 0.3626 & 0.003 & 0.3614 \\
16 & 0.007 & 0.3336 & 0.002 & 0.2100 & 0.002 & 0.2847 \\
\bottomrule
\end{tabular}
\caption{Performance (in RMSE $\downarrow$) of edge-based and node-based architectures on a star-graph topology. The first number is the performance of the best edge-based model, and the second is the best node-based model, across a range of depths up to 10 ($2 \times$ the planted model), widths $\in \{16, 32, 64\}$, and a range of learning rates.}
\label{tab:stargraph-table-e}
\end{table}

\subsection{A synthetic task for inference in Ising models}
\label{s:isingmodels}
Finally, motivated by the probabilistic inference setting in \cref{thm:hub-separation}, we consider a synthetic sandbox of using GNNs to predict the values of marginals in an Ising model~\citep{ising1925beitrag,onsager1944crystal} \--- a natural type of pairwise graphical model where each node takes a value in  $\{\pm 1\}$, and each edge potential is a weighted product of the edge endpoint values. Concretely, the probability distribution 
of an Ising model over graph $G = (V,E)$ has the form: 
$$\forall x \in \{\pm 1\}^n:
    p_{J,h}(x) \propto \exp\Bigl(\sum_{\{i,j\} \in E} J_{\{i,j\}} x_i x_j + \sum_{i \in V} h_i x_i \Bigr).$$

Similar to in \cref{s:stargraph}, we construct a training set where the graph $G$ and and edge potentials stay fixed (precisely, $J_{i,j} = 1$  for all $\{i,j\} \in E$). A training data-point consists of a vector of node potentials $\{h_i\}_{i \in [n]}$, and labels $\{\mathbb{E}[x_i]\}_{i \in [n]}$ consisting of the marginals from the resulting Ising model $p_{J,h}$. The node potentials are sampled from a standard Gaussian distribution.     

There is a natural connection between GNNs and calculating marginals: a classical way to calculate $\{\mathbb{E}[x_i]\}$ when $G$ is a \emph{tree} is to iterate a message passing algorithm called \emph{belief propagation} \eqref{eq:bpupdates}, in which for each edge $\{i,j\}$ and direction $i \to j$, a message $\nu^{(t+1)}_{i \to j}$ is calculated that depends on messages $\{\nu^{(t)}_{k \to i}\}_{\{k,i\} \in E}$. The belief-propagation updates \eqref{eq:bpupdates} naturally fit the general edge-message passing paradigm from \eqref{eq:edgempnn}. In fact, they fit even more closely a ``directed'' version of the paradigm, in which each edge $\{i,j\}$ maintains two embeddings $h_{i \to j}, h_{j \to i}$, such that the embedding for direction $h_{i \to j}$ depends on the embeddings $\{h_{k \to i}\}_{\{k,i\} \in E}$ --- and it is possible to derive a similar ``directed'' node-based architecture (See \cref{s:archsising}). For both the undirected and directed version of the architecture, we see that maintaining edge embeddings gives robust benefits over maintaining node embeddings---for a variety of tree topologies including complete binary trees, path graphs, and uniformly randomly sampled trees of a fixed size. More details are included in \cref{a:isingdetails}.
\vspace{-0.2cm}
\section{Conclusions and future work}

Graph neural networks are the best-performing machine learning method for many tasks over graphs. There is a wide variety of GNN architectures, which  frequently make opaque design choices and whose causal influence on the final performance is difficult to understand and estimate. In this paper, we focused on understanding the impact of maintaining edge embeddings on the representational power, as well as the subtleties of considering constraints like memory and invariance. One significant downside of maintaining edge embeddings is the \emph{computational} overhead on dense graphs. Hence, a fruitful direction for future research would be to explore more computationally efficient variants of edge-based architectures that preserve their representational power and performance.   

\section*{Acknowledgements}

DR is supported by a U.S. DoD NDSEG Fellowship. TM is supported in part by CMU Software Engineering Institute via Department of Defense under contract
FA8702-15-D-0002. ZK gratefully acknowledges the NSF
(FAI 2040929 and IIS2211955), UPMC, Highmark Health, Abridge, Ford Research, Mozilla, the
PwC Center, Amazon AI, JP Morgan Chase, the Block Center, the Center for Machine Learning
and Health, and the CMU Software Engineering Institute (SEI) via Department of Defense contract
FA8702-15-D-0002, for their generous support of ACMI Lab’s research. JL is supported in part by NSF awards DMS-2309378 and IIS-2403275. AM is supported in part by a Microsoft Trustworthy AI Grant, an ONR grant and a David and Lucile Packard Fellowship. AR is supported in part by NSF awards IIS-2211907, CCF-2238523, IIS-2403275, an Amazon Research Award, a Google Research Scholar Award, and an OpenAI Superalignment Fast Grant.
\clearpage

\bibliographystyle{iclr2025_conference}
\bibliography{bibliography}

\begin{thebibliography}{30}
\providecommand{\natexlab}[1]{#1}
\providecommand{\url}[1]{\texttt{#1}}
\expandafter\ifx\csname urlstyle\endcsname\relax
  \providecommand{\doi}[1]{doi: #1}\else
  \providecommand{\doi}{doi: \begingroup \urlstyle{rm}\Url}\fi

\bibitem[Abrahamson(1991)]{abrahamson1991time}
Karl Abrahamson.
\newblock Time-space tradeoffs for algebraic problems on general sequential machines.
\newblock \emph{Journal of Computer and System Sciences}, 43\penalty0 (2):\penalty0 269--289, 1991.

\bibitem[Battaglia et~al.(2016)Battaglia, Pascanu, Lai, Jimenez~Rezende, et~al.]{battaglia2016interaction}
Peter Battaglia, Razvan Pascanu, Matthew Lai, Danilo Jimenez~Rezende, et~al.
\newblock Interaction networks for learning about objects, relations and physics.
\newblock \emph{Advances in neural information processing systems}, 29, 2016.

\bibitem[Black et~al.(2023)Black, Wan, Nayyeri, and Wang]{black2023understanding}
Mitchell Black, Zhengchao Wan, Amir Nayyeri, and Yusu Wang.
\newblock Understanding oversquashing in gnns through the lens of effective resistance.
\newblock In \emph{International Conference on Machine Learning}, pp.\  2528--2547. PMLR, 2023.

\bibitem[Cai et~al.(2021)Cai, Li, Wang, and Ji]{cai2021line}
Lei Cai, Jundong Li, Jie Wang, and Shuiwang Ji.
\newblock Line graph neural networks for link prediction.
\newblock \emph{IEEE Transactions on Pattern Analysis and Machine Intelligence}, 44\penalty0 (9):\penalty0 5103--5113, 2021.

\bibitem[Chen et~al.(2017)Chen, Li, and Bruna]{chen2017supervised}
Zhengdao Chen, Xiang Li, and Joan Bruna.
\newblock Supervised community detection with line graph neural networks.
\newblock \emph{arXiv preprint arXiv:1705.08415}, 2017.

\bibitem[Choudhary \& DeCost(2021)Choudhary and DeCost]{choudhary2021atomistic}
Kamal Choudhary and Brian DeCost.
\newblock Atomistic line graph neural network for improved materials property predictions.
\newblock \emph{npj Computational Materials}, 7\penalty0 (1):\penalty0 185, 2021.

\bibitem[Dwivedi et~al.(2023)Dwivedi, Joshi, Luu, Laurent, Bengio, and Bresson]{dwivedi2023benchmarking}
Vijay~Prakash Dwivedi, Chaitanya~K Joshi, Anh~Tuan Luu, Thomas Laurent, Yoshua Bengio, and Xavier Bresson.
\newblock Benchmarking graph neural networks.
\newblock \emph{Journal of Machine Learning Research}, 24\penalty0 (43):\penalty0 1--48, 2023.

\bibitem[Gilmer et~al.(2017)Gilmer, Schoenholz, Riley, Vinyals, and Dahl]{gilmer2017neural}
Justin Gilmer, Samuel~S Schoenholz, Patrick~F Riley, Oriol Vinyals, and George~E Dahl.
\newblock Neural message passing for quantum chemistry.
\newblock In \emph{International conference on machine learning}, pp.\  1263--1272. PMLR, 2017.

\bibitem[Grigor'ev(1976)]{grigor1976application}
Dmitrii~Yur'evich Grigor'ev.
\newblock Application of separability and independence notions for proving lower bounds of circuit complexity.
\newblock \emph{Zapiski Nauchnykh Seminarov POMI}, 60:\penalty0 38--48, 1976.

\bibitem[H{\aa}stad \& Wigderson(2007)H{\aa}stad and Wigderson]{haastad2007randomized}
Johan H{\aa}stad and Avi Wigderson.
\newblock The randomized communication complexity of set disjointness.
\newblock \emph{Theory of Computing}, 3\penalty0 (1):\penalty0 211--219, 2007.

\bibitem[Huang \& Villar(2021)Huang and Villar]{huang2021short}
Ningyuan~Teresa Huang and Soledad Villar.
\newblock A short tutorial on the weisfeiler-lehman test and its variants.
\newblock In \emph{ICASSP 2021-2021 IEEE International Conference on Acoustics, Speech and Signal Processing (ICASSP)}, pp.\  8533--8537. IEEE, 2021.

\bibitem[Ising(1924)]{ising1925beitrag}
Ernst Ising.
\newblock \emph{Beitrag zur theorie des ferro-und paramagnetismus}.
\newblock PhD thesis, Grefe \& Tiedemann Hamburg, Germany, 1924.

\bibitem[Kipf \& Welling(2016)Kipf and Welling]{kipf2016semi}
Thomas~N Kipf and Max Welling.
\newblock Semi-supervised classification with graph convolutional networks.
\newblock \emph{arXiv preprint arXiv:1609.02907}, 2016.

\bibitem[Leman \& Weisfeiler(1968)Leman and Weisfeiler]{leman1968reduction}
AA~Leman and Boris Weisfeiler.
\newblock A reduction of a graph to a canonical form and an algebra arising during this reduction.
\newblock \emph{Nauchno-Technicheskaya Informatsiya}, 2\penalty0 (9):\penalty0 12--16, 1968.

\bibitem[Liang \& Pu(2023)Liang and Pu]{liang2023line}
Jinbi Liang and Cunlai Pu.
\newblock Line graph neural networks for link weight prediction.
\newblock \emph{arXiv preprint arXiv:2309.15728}, 2023.

\bibitem[Linial(1992)]{linial1992locality}
Nathan Linial.
\newblock Locality in distributed graph algorithms.
\newblock \emph{SIAM Journal on computing}, 21\penalty0 (1):\penalty0 193--201, 1992.

\bibitem[Loukas(2019)]{loukas2019graph}
Andreas Loukas.
\newblock What graph neural networks cannot learn: depth vs width.
\newblock \emph{arXiv preprint arXiv:1907.03199}, 2019.

\bibitem[Loukas(2020)]{loukas2020hard}
Andreas Loukas.
\newblock How hard is to distinguish graphs with graph neural networks?
\newblock \emph{Advances in neural information processing systems}, 33:\penalty0 3465--3476, 2020.

\bibitem[Maron et~al.(2019)Maron, Ben-Hamu, Serviansky, and Lipman]{maron2019provably}
Haggai Maron, Heli Ben-Hamu, Hadar Serviansky, and Yaron Lipman.
\newblock Provably powerful graph networks.
\newblock \emph{Advances in neural information processing systems}, 32, 2019.

\bibitem[Mezard \& Montanari(2009)Mezard and Montanari]{mezard2009information}
Marc Mezard and Andrea Montanari.
\newblock \emph{Information, physics, and computation}.
\newblock Oxford University Press, 2009.

\bibitem[Onsager(1944)]{onsager1944crystal}
Lars Onsager.
\newblock Crystal statistics. i. a two-dimensional model with an order-disorder transition.
\newblock \emph{Physical Review}, 65\penalty0 (3-4):\penalty0 117, 1944.

\bibitem[Oono \& Suzuki(2019)Oono and Suzuki]{oono2019graph}
Kenta Oono and Taiji Suzuki.
\newblock Graph neural networks exponentially lose expressive power for node classification.
\newblock \emph{arXiv preprint arXiv:1905.10947}, 2019.

\bibitem[Peleg(2000)]{peleg2000distributed}
David Peleg.
\newblock \emph{Distributed computing: a locality-sensitive approach}.
\newblock SIAM, 2000.

\bibitem[Sanford et~al.(2024{\natexlab{a}})Sanford, Hsu, and Telgarsky]{sanford2024transformers}
Clayton Sanford, Daniel Hsu, and Matus Telgarsky.
\newblock Transformers, parallel computation, and logarithmic depth.
\newblock \emph{arXiv preprint arXiv:2402.09268}, 2024{\natexlab{a}}.

\bibitem[Sanford et~al.(2024{\natexlab{b}})Sanford, Hsu, and Telgarsky]{sanford2024representational}
Clayton Sanford, Daniel~J Hsu, and Matus Telgarsky.
\newblock Representational strengths and limitations of transformers.
\newblock \emph{Advances in Neural Information Processing Systems}, 36, 2024{\natexlab{b}}.

\bibitem[Savage(1998)]{savage1998models}
John~E Savage.
\newblock \emph{Models of computation}, volume 136.
\newblock Addison-Wesley Reading, 1998.

\bibitem[Telgarsky(2016)]{telgarsky2016benefits}
Matus Telgarsky.
\newblock Benefits of depth in neural networks.
\newblock In \emph{Conference on learning theory}, pp.\  1517--1539. PMLR, 2016.

\bibitem[Xu et~al.(2018)Xu, Hu, Leskovec, and Jegelka]{xu2018powerful}
Keyulu Xu, Weihua Hu, Jure Leskovec, and Stefanie Jegelka.
\newblock How powerful are graph neural networks?
\newblock \emph{arXiv preprint arXiv:1810.00826}, 2018.

\bibitem[Xu \& Zou(2023)Xu and Zou]{xu2023rethinking}
Tuo Xu and Lei Zou.
\newblock Rethinking and extending the probabilistic inference capacity of gnns.
\newblock In \emph{The Twelfth International Conference on Learning Representations}, 2023.

\bibitem[Zhao et~al.(2021)Zhao, Jin, Akoglu, and Shah]{zhao2021stars}
Lingxiao Zhao, Wei Jin, Leman Akoglu, and Neil Shah.
\newblock From stars to subgraphs: Uplifting any gnn with local structure awareness.
\newblock \emph{arXiv preprint arXiv:2110.03753}, 2021.

\end{thebibliography}

\clearpage
\appendix
\section*{Appendix}
\section{Omitted Proofs from \cref{section:map-separation}}\label{app:map-separation}

In this section we give omitted proofs and lemmas from \cref{section:map-separation}. 

\begin{figure}[t]
    \centering
    \includegraphics[width=0.5\textwidth]{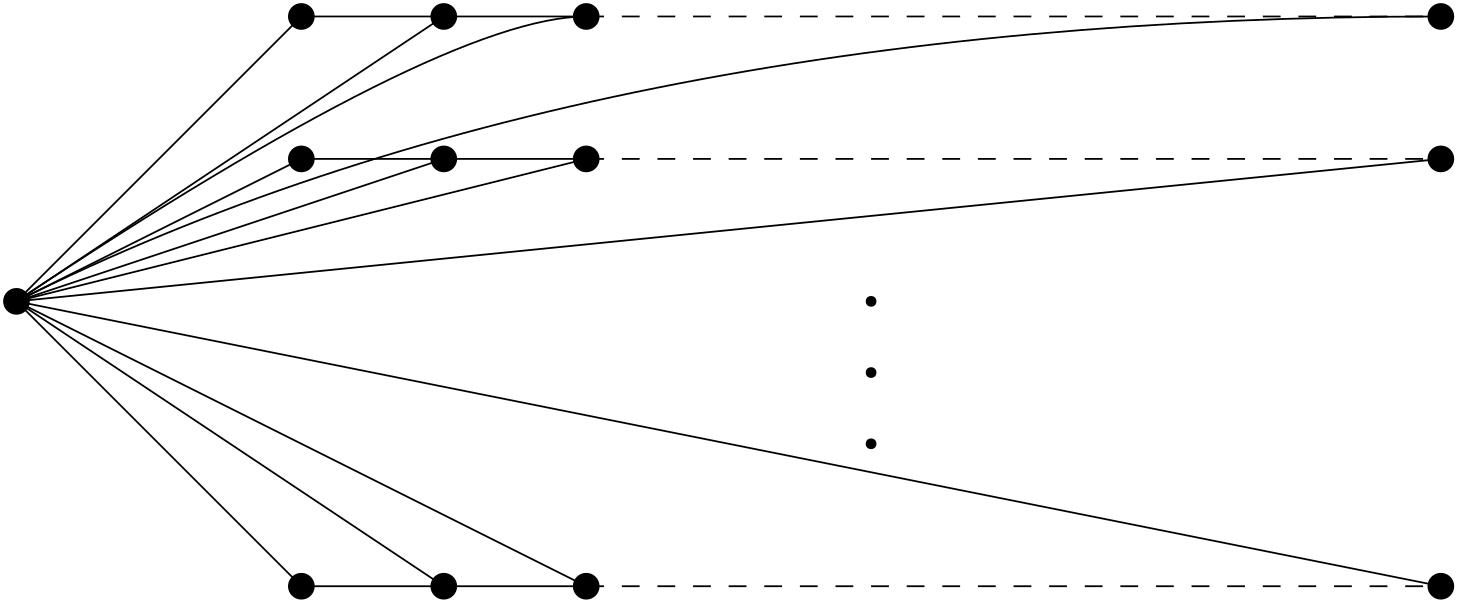}
    \caption{The graph $G$ for which \cref{thm:hub-separation} exhibits a separation between edge message-passing and node message-passing. The graph consists of $\sqrt{n}$ paths of length $\sqrt{n}$, as well as a single ``hub vertex'' connected to all other vertices.}
    \label{fig:path-graph}
\end{figure}

\begin{lemma}\label{lemma:evaluate-g}
Fix $n \in \NN$. Let $G$, $\Phi$ be as defined in \cref{thm:hub-separation}. Then there is an $O(n)$-time algorithm that computes a MAP evaluator for $G$ with potential function class $\Phi$.
\end{lemma}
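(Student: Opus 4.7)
The plan is to exploit the structure of $G$: the hub vertex $0$ is the only source of coupling between distinct paths, so conditioning on the two possible values of $x_0$ decouples the problem into $\sqrt{n}$ independent path MAP problems, each solvable by standard chain dynamic programming in linear time.

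First I would rewrite the MAP objective
\[
\min_{x \in \{0,1\}^V} \sum_{\{a,b\} \in E} \phi_{\{a,b\}}(x_a,x_b)
\]
by splitting the sum into (i) hub-incident edges $\{0,(i,j)\}$, and (ii) path edges $\{(i,j),(i+1,j)\}$. For each $s \in \{0,1\}$, fixing $x_0 = s$ turns each hub-incident potential into a unary bias
\[
b^{(s)}_{(i,j)}(y) := \phi_{\{0,(i,j)\}}(s,y)
\]
on the path vertex $(i,j)$. The conditional objective then separates across paths: for each path $j$ one is left with the chain MAP problem
\[
\min_{y \in \{0,1\}^{[\sqrt{n}]}} \sum_{i} b^{(s)}_{(i,j)}(y_i) + \sum_{i} \phi_{\{(i,j),(i+1,j)\}}(y_i,y_{i+1}).
\]

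Second, for each pair $(s,j)$ I would run the textbook forward DP along the $j$-th path, maintaining a table $D_i(y)$ equal to the minimum cost of assigning the prefix of the path ending at $(i,j)$ with that endpoint equal to $y$, via the recurrence
\[
D_i(y) = b^{(s)}_{(i,j)}(y) + \min_{y' \in \{0,1\}} \bigl(D_{i-1}(y') + \phi_{\{(i-1,j),(i,j)\}}(y',y)\bigr),
\]
with backpointers to recover an optimal assignment. Each DP step runs in $O(1)$ time since $|\Phi| = O(1)$ and each potential evaluates in constant time, so one path costs $O(\sqrt{n})$ and all $\sqrt{n}$ paths cost $O(n)$ per value of $s$. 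I would run the DP for $s=0$ and $s=1$, choose whichever $s$ yields the smaller total cost, and assemble the resulting per-path assignments into a single $x \in \{0,1\}^V$.

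Correctness is immediate from the decomposition: for any fixed $x_0$, the objective splits exactly into a sum of independent per-path costs (plus a constant depending only on $x_0$), and chain DP computes each per-path minimum exactly; taking the best $s$ then yields a global minimizer. The total runtime is $O(n)$ because the outer loop has only two iterations. There is no substantial obstacle here; the only bookkeeping subtlety is that, under the edge definition in \cref{thm:hub-separation}, the path edges begin at $i=2$, so $(1,j)$ is a leaf attached only to the hub — its contribution is simply $\min_{y \in \{0,1\}} b^{(s)}_{(1,j)}(y)$ and can be handled outside the DP chain for path $j$.
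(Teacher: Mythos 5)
Your proposal is correct and follows essentially the same route as the paper's proof: condition on the hub value $x_0\in\{0,1\}$, observe that the problem then decomposes into $\sqrt{n}$ independent chain MAP problems (the hub-incident potentials becoming unary terms), solve each by standard linear-time dynamic programming, and take the better of the two hub values. The paper merely folds the hub-incident potentials directly into its DP tables $\hat C_{i,j}(c,c_0;J)$ rather than phrasing them as unary biases, and your closing remark about the edge indexing is just a cosmetic reading of what is evidently a typo in the edge-set definition; neither difference is substantive.
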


\begin{proof}
Fix any $J \in \Phi^E$. As preliminary notation, for each $c,c_0 \in \{0,1\}$ and $i,j \in \sqrt{n}$, let $V(i,j) := \{0\}\cup\{(k,j): 1 \leq k \leq i\}$, and let $E(i,j)$ be the edge set of the induced subgraph $G[V(i,j)]$. Let
\begin{align*}
\hat{x}_{i,j}(c,c_0;J) &:= \argmin_{\substack{x \in \{0,1\}^{V(i,j)}: \\ x_0 = c_0 \,\land\, x_{(i,j)} = c}} \sum_{(a,b)\in E(i,j)} J_{\{a,b\}}(x_a,x_b), \\ 
\hat{C}_{i,j}(c,c_0;J) &:= \min_{\substack{x \in \{0,1\}^{V(i,j)}: \\ x_0 = c_0 \,\land\, x_{(i,j)} = c}} \sum_{(a,b)\in E(i,j)} J_{\{a,b\}}(x_a,x_b).
\end{align*}
For each $j \in [\sqrt{n}]$, let 
\[\hat x_j(c_0;J) := \hat x_{\sqrt{n},j}\left(\left(\argmin_{c \in \{0,1\}} \hat C_{\sqrt{n},j}(c,c_0;J)\right), c_0; J\right).\]
Finally, let $\hat{x}(c_0;J) \in \{0,1\}^V$ be the vector which takes value $c_0$ on vertex $0$, and value $\hat{x}_j(c_0;J)_i$ on vertex $(i,j)$ for all $i,j \in \sqrt{n}$. Let
\[\hat{x}(J) := \argmax_{c_0 \in \{0,1\}} p_J(\hat{x}(c_0;J)).\]
We claim that $\hat{x}(J)$ is a maximizer of $p_J(x)$. Indeed, for any fixed $c_0 \in \{0,1\}$, $\hat x(c_0;J)$ is a maximizer of $p_J(x)$ subject to $x_0 = c_0$, because under this constraint the maximization problem decomposes into $\sqrt{n}$ independent maximization problems, one for each path in $G$, which by definition are solved by $\hat x_1(c_0;J), \dots, \hat x_{\sqrt{n}}(c_0;J)$.

Moreover, it's straightforward to see that for any fixed $j$, $\hat C_j(c_0;J)$ can be computed in $O(\sqrt{n})$ time by dynamic programming. Indeed for any $i,j$,  $\hat C_{i,j}(c,c_0;J)$ can be computed in $O(1)$ time from $\hat C_{i-1,j}(0,c_0;J)$ and $\hat C_{i-1,j}(1,c_0;J)$ as well as $J_{\{0,(i,j)\}}$ and $J_{\{(i-1,j),(i,j)\}}$. Once the values $\hat C_{i,j}(c,c_0;J)$ have been computed for all $i \in [\sqrt{n}]$ and $c \in \{0,1\}$, the vector $\hat x_j(c_0;J)$ can be computed in $O(\sqrt{n})$ time via a reverse scan over $i = \sqrt{n},\dots, 1$. It follows that $\hat x(J)$ can be computed in $O(n)$ time.
\end{proof}

\begin{proof}[Proof of \cref{prop:node-edge-simulation}]
We claim that there is a node message-passing protocol $P'$ on $G$ with $T+1$ rounds that at each time $t \in [T+1]$ has computed
\[P'_t(v;I) = (P_{t-1}(e;I))_{e \in M_G(v)}.\]
We argue inductively. Since $P_0 \equiv 0$, it's clear that this can be achieved for $t = 1$. Fix any $t > 1$ and suppose that $P'_{t-1}(u;I) = (P_{t-2}(e;I))_{e \in M_G(u)}$ for all $u \in V$ and inputs $I$. For each $v \in V$, we define a function $f'_{t,v}$ by
\[f'_{t,v}((c(v'))_{v' \in N_G(v)}, (I(e))_{e \in M_G(v)})_{e^\st} := f_{t-1,e^\st}((c(v)_e)_{e \in M_G(v)}, (c(v^\st)_e)_{e \in M_G(v^\st)}, I(e^\st))\]
for each $e^\st = (v,v^\st) \in M_G(v)$. Then by definition and the inductive hypothesis, we have
\begin{align*}
P'_t(v;I)_{e^\st}
&= f'_{t,v}((P'_{t-1}(v';I))_{v'\in N_G(v)}, (I(e))_{e \in M_G(v)})_{e^\st} \\ 
&= f_{t-1,e^\st}((P'_{t-1}(v;I)_e)_{e \in M_G(v)}, (P'_{t-1}(v^\st;I)_e)_{e \in M_G(v^\st)}, I(e^\st)) \\ 
&= f_{t-1,e^\st}((P_{t-2}(e;I))_{e \in M_G(v)}, (P_{t-2}(e;I)_e)_{e \in M_G(v^\st)}, I(e^\st)) \\ 
&= P_{t-1}(e^\st; I)
\end{align*}
for any edge $e^\st = (v,v^\st) \in E$, since $M_G(e) = M_G(v) \cup M_G(v^\st)$. This completes the induction and shows that $P'_{T+1}(v;I) = (P_T(e;I))_{e \in M_G(v)}$ for all $v, I$. Replacing $f'_{T+1,v}$ by $\tilde f_{T,v} \circ f'_{T+1,v}$ completes the proof.
\end{proof}

\section{Omitted Proofs from \cref{section:mapsymmetry}}\label{app:mapsymmetry}

\begin{proof}[Proof of \cref{t:symmetryalone}]
Without loss of generality, we may assume that the functions $(\fsym_t)_{t \in [T]}$ and $\tfsym$ are all the identity function (on the appropriate domains). The reason is that any symmetric edge message-passing protocol $\tilde{P}$ on $T$ rounds may be simulated by running $P$ and then applying a universal function (depending only on $\tilde{P}$) to each node's output value \--- see \cref{lemma:univ-sym-protocol}. 

We argue by induction that for each $t \in [T]$, there is a $(t+1)$-round symmetric node message-passing protocol that, on any input $I$, computes the function $Q_t(u;I) := \multiset{P_t(e; I): e \in M_G(u)}$ for every node $u \in V$. Consider $t = 1$. For any $e = (u,v) \in E$, we have by symmetry and the initial assumption that
\[P_1(e;I) = (I(e), 0, \multiset{\multiset{0: v' \in N_G(u)}, \multiset{0: u' \in N_G(v)}}).\]
We define a two-round node message-passing protocol on $G$ where the first update at node $u$ computes
\[P'_1(u;I) = \multiset{I(\{u,v\}): v \in N_G(u)}\]
and the second update at node $u$ computes
\begin{align*}
(P'_1(u;I), \multiset{(P'_1(v;I), I(\{u,v\})): v \in N_G(u)})
&\mapsto \multiset{(I(\{u,v\}),0, |N_G(u)|, |P'_1(v;I)|): v \in N_G(u)} \\ 
&\mapsto \multiset{(I(\{u,v\}),0,\multiset{|N_G(u)|, |P'_1(v;I)|}): v \in N_G(u)} \\
&= \multiset{P_1(\{u,v\};I): v \in N_G(u)} =: P'_2(u;I)
\end{align*}
since $|P'_1(v;I)| = |N_G(v)|$. By construction, this protocol is symmetric, which proves the induction for step $t=1$. 

Now pick any $t > 1$. For any $e = \{u,v\} \in E$, we have
\begin{align*}
P_t(e;I) 
&= (I(e),P_{t-1}(e;I),\multiset{Q_{t-1}(u;I),Q_{t-1}(v;I)})
\end{align*}
By the induction hypothesis, there is a $t$-round symmetric node message-passing protocol $P'$ that, at node $v$ on input $I$, computes
\[P'_t(v;I) = \multiset{P_{t-1}(\{v,v'\};I): v' \in N_G(v)} = Q_{t-1}(v;I).\]
Note that since $P_{t-1}(e;I)$ is an element of the tuple $P_t(e;I)$, for each $1 \leq s \leq t-1$ there is a fixed function $\gamma_s$ such that $\gamma_s(Q_{t-1}(v;I)) = Q_s(v;I)$ for all $v, I$. Using this fact, we extend $P'$ to $t+1$ rounds, defining the update at round $t+1$ and node $u$ as follows:
\begin{align*}
&(P'_t(u;I),\multiset{(P'_t(v;I), I(\{u,v\})): v \in N_G(u)}) \\
&= (Q_{t-1}(u;I), \multiset{(Q_{t-1}(v;I), I(\{u,v\})): v \in N_G(u)}) \\ 
&\mapsto (Q_{1:t-1}(u;I), \multiset{(Q_{1:t-1}(v;I), I(\{u,v\})): v \in N_G(u)}) \\ 
&\mapsto (Q_{1:t-1}(u;I), \multiset{(Q_{1:t-1}(v;I), I(\{u,v\})): v \in N_G(u)}) \\
&= \multiset{(I(\{u,v\}),\multiset{Q_{1:t-1}(u;I), Q_{1:t-1}(v;I)}): v \in N_G(u)} \\ 
&\mapsto \multiset{(I(\{u,v\}),P_{t-1}(\{u,v\};I), \multiset{Q_{t-1}(u;I), Q_{t-1}(v;I)}): v \in N_G(u)} =: P'_{t+1}(u;I)
\end{align*}
where $Q_{1:t-1}(u;I)$ refers to the tuple $(Q_1(u;I),\dots,Q_{t-1}(u;I))$. The first map is well-defined due to the existence of the functions $\gamma_1,\dots,\gamma_{t-1}$, and the final map is well-defined because the definition of $P_{t-1}(\{u,v\};I)$ can be iteratively unpacked, and it is ultimately a function of \[(I(\{u,v\}),\multiset{Q_{1:t-1}(u;I), Q_{1:t-1}(v;I)}).\] This shows that $P'$ computes $Q_t(v;I)$ at node $u$ on input $I$. By construction, $P'$ is symmetric. This completes the induction. Since $Q_T(u;I)$ is precisely the output of $P$ at node $u$ on input $I$ (after the node aggregation step), this shows that $P$ can be simulated by a $(T+1)$-round symmetric node message-passing protocol on $G$. 
\end{proof}

\begin{lemma}\label{lemma:univ-sym-protocol}
Let $T \geq 1$, and let $P = ((f_{t,e})_{t \in [T], e \in E}, (\tilde f_v)_{v \in V})$ be a symmetric edge message-passing protocol on $G = (V,E)$ with $T$ rounds. Consider the $T$-round edge message-passing protocol $P^\circ = ((f^\circ_{t,e})_{t \in [T], e \in E}, (\tilde f^\circ_v)_{v \in V})$ where for all $t,e$,
\[f_{t,e}^\circ((c(e'))_{e' \in M_G(e)}, I(e)) := (I(e), c(e),\multiset{c(\{u,v'\}): v' \in N_G(u)}, \multiset{c(\{u',v\}): u' \in N_G(v)}),\]
and for every $v \in V$, 
\[\tilde f_v^\circ((c(e))_{e \in M_G(v)}) := \multiset{c(e): e \in M_G(v)}.\]
Then there is a function $h$ such that $\tilde f_v((P_T(e;I))_{e \in M_G(v)}) = h(\tilde f^\circ_v((P^\circ_T(e;I))_{e \in M_G(v)}))$ for all $v, I$.
\end{lemma}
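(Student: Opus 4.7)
The plan is to show by induction on $t$ that for each $t \in [T]$, there exists a function $\alpha_t$ (depending on the updates $\fsym_1, \dots, \fsym_t$ of $P$) such that $P_t(e;I) = \alpha_t(P^\circ_t(e;I))$ for all edges $e$ and all inputs $I$. Given this, the lemma follows quickly: since $\tilde f_v$ is symmetric (i.e. $\tilde f_v((c(e))_{e \in M_G(v)}) = \tfsym(\multiset{c(e): e \in M_G(v)})$), we have
\[ \tilde f_v((P_T(e;I))_{e \in M_G(v)}) = \tfsym(\multiset{\alpha_T(P^\circ_T(e;I)): e \in M_G(v)}), \]
and $\tilde f^\circ_v((P^\circ_T(e;I))_{e \in M_G(v)})$ is precisely the multiset $\multiset{P^\circ_T(e;I): e \in M_G(v)}$. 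Defining $h(\multiset{y_e: e \in M_G(v)}) := \tfsym(\multiset{\alpha_T(y_e): e \in M_G(v)})$ is well-defined as a function of a multiset and yields the desired identity.

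For the base case $t=1$, since $P_0 \equiv P^\circ_0 \equiv 0$, we have $P^\circ_1(e;I) = (I(e), 0, \multiset{0: v' \in N_G(u)}, \multiset{0: u' \in N_G(v)})$ while $P_1(e;I) = \fsym_1(I(e), 0, \multiset{\multiset{0:v' \in N_G(u)}, \multiset{0:u' \in N_G(v)}})$. So $\alpha_1$ takes an ordered tuple $(I(e), 0, A, B)$, forms the multiset of multisets $\multiset{A, B\}$, and then applies $\fsym_1$. For the inductive step, by definition
\[ P^\circ_t(e;I) = (I(e), P^\circ_{t-1}(e;I), \multiset{P^\circ_{t-1}(\{u,v'\};I): v' \in N_G(u)}, \multiset{P^\circ_{t-1}(\{u',v\};I): u' \in N_G(v)}), \]
while by symmetry of $P$,
\[ P_t(e;I) = \fsym_t(I(e), P_{t-1}(e;I), \multiset{\multiset{P_{t-1}(\{u,v'\};I): v' \in N_G(u)}, \multiset{P_{t-1}(\{u',v\};I): u' \in N_G(v)}}). \]
Using the inductive hypothesis that $P_{t-1}(\cdot;I) = \alpha_{t-1}(P^\circ_{t-1}(\cdot;I))$, the right-hand side can be computed from $P^\circ_t(e;I)$ by applying $\alpha_{t-1}$ componentwise to each of the two inner multisets (and to the second component $P^\circ_{t-1}(e;I)$), then forming a multiset of the two resulting multisets, and finally applying $\fsym_t$. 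This composition defines $\alpha_t$.

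The only subtlety is the mismatch between the ``ordered pair of multisets'' appearing in $P^\circ$ versus the ``multiset of two multisets'' appearing in the definition of $\fsym_t$: but since the former determines the latter by taking the multiset of its components, this causes no difficulty (it is in fact precisely why $P^\circ$ is defined to record the ordered pair, retaining at least as much information as any symmetric protocol could use). No other obstacle arises: the update rules $\fsym_t$ and final aggregator $\tfsym$ of $P$ are baked into the definitions of $\alpha_T$ and $h$, and everything remains well-defined at the level of multisets.
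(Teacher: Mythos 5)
Your proposal is correct and follows essentially the same route as the paper's proof: an induction producing, for each round $t$, a translation function (your $\alpha_t$, the paper's $h_t$) with $P_t(e;I) = \alpha_t(P^\circ_t(e;I))$, using symmetry of $\fsym_t$ and the fact that the ordered pair of multisets recorded by $P^\circ$ determines the unordered multiset fed to $\fsym_t$, followed by composing with $\tfsym$ to define $h$. The only cosmetic difference is that you start the induction at $t=1$ rather than the trivial $t=0$ base case.
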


\begin{proof}
We prove by induction that for each $t \in \{0,\dots,T\}$ there is a function $h_t$ such that $P_t(e;I) = h_t(P^\circ_t(e;I))$ for all $e,I$. For $t = 0$ this is immediate from the convention that $P_0 \equiv P^\circ_0 \equiv 0$. Fix any $t \in \{1,\dots,T\}$. Since $P$ is symmetric, there is a function $\fsym_t$ so that for all $e = (u,v) \in E$ and inputs $I$,
\begin{align*}
P_t(e;I)
&= \fsym_t(I(e),P_{t-1}(e;I),\multiset{P_{t-1}(\{u,v'\};I): v' \sim u}, \multiset{P_{t-1}(\{u',v\};I): u' \sim v}) \\ 
&= \fsym_t(I(e),h_{t-1}(P^\circ_{t-1}(e;I)),\multiset{h_{t-1}(P^\circ_{t-1}(\{u,v'\};I)): v' \sim u}, \multiset{h_{t-1}(P^\circ_{t-1}(\{u',v\};I)): u' \sim v})
\end{align*}
which is indeed a well-defined function (independent of $e,I$) of 
\[P^\circ_t(e;I) = (I(e),P^\circ_{t-1}(e;I),\multiset{P^\circ_{t-1}(\{u,v'\};I): v' \sim u}, \multiset{P^\circ_{t-1}(\{u',v\};I): u' \sim v}).\]
This completes the induction. Finally, since $P$ is symmetric, there is a function $\tfsym$ such that $\tilde f_v((P_T(e;I))_{e\in M_G(v)}) = \tfsym(\multiset{P_T(e;I): e \in M_G(v)})$ for all $v,I$. Hence we can write
\begin{align*}
\tilde f_v((P_T(e;I))_{e\in M_G(v)}) 
&= \tfsym(\multiset{P_T(e;I): e \in M_G(v)}) \\
&= \tfsym(\multiset{h_T(P^\circ_T(e;I)): e \in M_G(v)})
\end{align*}
which is a well-defined function (independent of $v,I$) of $\multiset{P^\circ_T(e;I): e \in M_G(v)}$ as needed.
\end{proof}

\section{A quantitatively tight depth/memory separation}\label{app:cc}

For each $n \in \NN$, let $K_n := ([n], E_n)$ be the complete graph on $[n]$. In this section we show that there is a function that can be computed by an edge message-passing protocol on $K_n$ with constant rounds and constant memory per processor, but for which any node message-passing protocol with $T$ rounds and $B$ bits of memory requires $TB \geq \Omega(n)$. We remark that this separation is quantitatively tight due to \cref{prop:node-edge-simulation}, although it is possible that a larger (e.g. even super-polynomial in $n$) depth separation may be possible if the node message-passing protocol is restricted to constant memory per processor.

At a technical level, the lower bound proceeds via a reduction from the \emph{set disjointness problem} in communication complexity, similar to the lower bounds in \cite{loukas2019graph}.

\begin{definition}
Fix $m \in \NN$. The set disjointness function $\disj_m: \{0,1\}^m \times \{0,1\}^m \to \{0,1\}$ is defined as
\[\disj_m(A, B) := \mathbbm{1}[\forall i \in [m]: A_iB_i = 0].\]
\end{definition}

The following fact is well-known; see e.g. discussion in \cite{haastad2007randomized}.

\begin{lemma}\label{lemma:set-disj}
In the two-party deterministic communication model, the deterministic communication complexity of $\disj_m$ is at least $m$.
\end{lemma}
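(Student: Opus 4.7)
The plan is to prove this lower bound by a standard fooling set argument from communication complexity. Recall that in any deterministic communication protocol $\Pi$ with cost $c$ computing a function $f: \MX \times \mathcal{Y} \to \{0,1\}$, the protocol partitions $\MX \times \mathcal{Y}$ into at most $2^c$ combinatorial rectangles $R_1, \dots, R_k$ (one per leaf of the protocol tree), and $f$ must be constant on each rectangle, since the transcript of $\Pi$ alone determines the output. Thus to prove a lower bound of $m$ on communication, it suffices to exhibit $2^m$ input pairs that must each lie in a distinct rectangle.

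The concrete plan is to build such a fooling set using complementation. For each $A \in \{0,1\}^m$, let $\bar A \in \{0,1\}^m$ denote its bitwise complement, and consider the collection of pairs $F := \{(A, \bar A) : A \in \{0,1\}^m\}$, which has size $2^m$. First I would check that $\disj_m(A, \bar A) = 1$ for every $A$, since $A_i (\bar A)_i = 0$ for all $i$. Next I would verify the fooling property: for any distinct $A, B \in \{0,1\}^m$, there is some coordinate $i$ with $A_i \neq B_i$; without loss of generality $A_i = 1$ and $B_i = 0$, so $(\bar A)_i = 0$ and $(\bar B)_i = 1$, hence $A_i (\bar B)_i = 1$, giving $\disj_m(A, \bar B) = 0$.

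Now suppose $\Pi$ is a deterministic protocol computing $\disj_m$ with cost $c$. If two distinct pairs $(A, \bar A)$ and $(B, \bar B)$ in $F$ reached the same leaf of $\Pi$, then by the rectangle property the "crossed" pairs $(A, \bar B)$ and $(B, \bar A)$ would reach the same leaf as well, forcing the protocol to output $1$ on those inputs. But we just showed $\disj_m(A, \bar B) = 0$, a contradiction. Hence the $2^m$ pairs in $F$ reach $2^m$ distinct leaves, so $2^c \geq 2^m$ and $c \geq m$.

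The argument is essentially self-contained, and I do not expect any technical obstacle; the only step that requires a little care is justifying the rectangle property of deterministic protocols, which follows from a straightforward induction on the protocol tree (the set of $(x,y)$ consistent with any fixed partial transcript is closed under "swapping" coordinates of inputs that produce the same messages thus far, hence is a rectangle). Given that this fact is standard and is invoked implicitly in the paper's citation to \cite{haastad2007randomized}, I would state it as a lemma and move straight to the fooling set construction above.
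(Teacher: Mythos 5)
Your proof is correct. Note, though, that the paper itself does not prove \cref{lemma:set-disj} at all: it invokes the deterministic lower bound for $\disj_m$ as a well-known fact and points to the discussion in the cited reference, so there is no in-paper argument to compare against. Your fooling-set proof is the standard textbook one and is a valid self-contained substitute: the set $\{(A,\bar A): A \in \{0,1\}^m\}$ of size $2^m$ is indeed a $1$-fooling set, since $\disj_m(A,\bar A)=1$ for every $A$, while for distinct $A,B$ at least one of the crossed pairs $(A,\bar B),(B,\bar A)$ evaluates to $0$ (whichever cross pairs the coordinate where the inputs differ with value $1$ against the complement's $1$); combined with the facts that a cost-$c$ deterministic protocol induces at most $2^c$ monochromatic rectangles and that correctness forces the leaf reached by $(A,\bar A)$ to output $1$, this gives $2^c \geq 2^m$, i.e.\ $c \geq m$, exactly the bound the paper needs for the reduction in Appendix~\ref{app:cc}. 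The only steps you should spell out if you include this are the two standard facts you flag: the prefix-freeness/leaf-counting bound of $2^c$ leaves and the rectangle property of transcripts, both of which follow by the routine induction you describe.
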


The main result of this section is the following:

\begin{theorem}
Fix any even $n \in \NN$. Define $g: \{0,1\}^{E_n} \to \{0,1\}^n$ by 
\[g(I)_v := \mathbbm{1}[\exists \{i,j\} \in E_n: i,j \leq n/2 \land I(\{i,j\}) = I(\{n+1-i,n+1-j\}) = 1]\]
for all $I \in \{0,1\}^{E_n}$ and $v \in [n]$. Then the following properties hold:
\begin{itemize}
\item Any node message-passing protocol on $K_n$ with $T$ rounds and $B$ bits of memory that computes $g$ requires $TB \geq \Omega(n)$
\item There is an edge message-passing protocol on $K_n$ with $O(1)$ rounds and $O(1)$ bits of memory that computes $g$.
\end{itemize}
\end{theorem}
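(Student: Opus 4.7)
The plan is to handle the two parts with quite different techniques: Part~$2$ by an explicit three-round edge message-passing protocol that exploits the fact that the line graph $L(K_n)$ has diameter at most $2$, and Part~$1$ by a cut-simulation reduction from deterministic two-party set disjointness (\cref{lemma:set-disj}), in the spirit of the lower bounds of \cite{loukas2019graph}.

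For the upper bound I would construct the following edge protocol. In round~$1$, each edge $e$ stores $I(e)$. In round~$2$, edge $e$ stores the indicator of the event ``there exist $i,j \le n/2$ with $\{i,j\}$ and $\{n+1-i,n+1-j\}$ both in $M_G(e)$ and both carrying round-$1$ state~$1$''; this is a legal update because it is a function of the round-$1$ states in $M_G(e)$, and it is evaluable in $O(|M_G(e)|)$ time by iterating over the neighbors. In round~$3$, each edge takes the OR of the round-$2$ states over $M_G(e)$, and the vertex aggregation $\tilde f_v$ outputs the OR of the round-$3$ states of the edges incident to $v$. Correctness has two ingredients: if $g(I) = 1$ there is a ``good'' pair $(\{i^{\st},j^{\st}\},\{n+1-i^{\st},n+1-j^{\st}\})$, and the witness edge $\{i^{\st},n+1-j^{\st}\}$ is adjacent to both members of the pair (they share $i^{\st}$ and $n+1-j^{\st}$ respectively), so its round-$2$ state is $1$; and for every vertex $v$ some edge in $M_G(v)$ is adjacent to this witness, because two edges of $K_n$ that share no vertex, say $\{a,b\}$ and $\{c,d\}$, are always joined by the third edge $\{a,c\}$.

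For the lower bound I would reduce from $\disj_m$ with $m = \binom{n/2}{2} = \Omega(n^2)$. Partition $V = V_L \cup V_R$ with $V_L = [n/2]$. Alice, holding $A \in \{0,1\}^m$, sets $I(\{i,j\}) := A_{\{i,j\}}$ for each pair $1 \le i < j \le n/2$; Bob sets $I(\{n+1-i,n+1-j\}) := B_{\{i,j\}}$ on the mirror edges within $V_R$; both parties agree to set every self-loop and every $V_L$--$V_R$ edge to $0$. Under this encoding $g(I) \equiv 1$ iff $A$ and $B$ intersect. Given any node MP protocol with $T$ rounds and $B$ bits of memory that computes $g$, Alice simulates the states at nodes in $V_L$ and Bob those in $V_R$: at the start of each round they exchange their $\tfrac{n}{2}B$-bit state vectors, after which each party knows every neighbor's previous state (since $K_n$ is complete) and has all the private edge inputs incident to its nodes, so it can execute its local updates. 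After $T$ rounds Alice reads $g(I)_1$ off of her simulation of vertex~$1$ and outputs the disjointness answer. The total communication is at most $TnB$ bits, so \cref{lemma:set-disj} yields $TnB \ge m = \Omega(n^2)$, i.e.\ $TB = \Omega(n)$.

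The main technical point is ensuring the encoding in the lower bound captures disjointness on $\Theta(n^2)$ bits---which is what yields the $\Omega(n)$ bound rather than merely $\Omega(\sqrt{n})$---and this is precisely why the mirror pairing is chosen: the two coordinates of each disjointness bit live on opposite sides of the $V_L/V_R$ cut, so the simulation is forced to communicate $\Omega(m)$ bits across the cut. The separation is also tight, since \cref{prop:node-edge-simulation} applied with $\Delta = n$ shows that any edge MP protocol on $K_n$ with $O(1)$ rounds and $O(1)$ memory can be simulated by a node MP protocol with $O(1)$ rounds and $O(n)$ memory, matching the lower bound up to constants.
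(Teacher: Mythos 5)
Your proof is correct and follows the paper's overall strategy on both halves: a constant-round, constant-memory edge protocol for the upper bound, and a cut-simulation reduction from deterministic two-party set disjointness for the lower bound. The lower-bound argument is essentially identical to the paper's: partition $V$ into $V_L = [n/2]$ and $V_R = [n]\setminus V_L$, place Alice's and Bob's length-$m$ inputs ($m = \binom{n/2}{2}$) on the intra-$V_L$ and mirror intra-$V_R$ edges, zero out cross edges and self-loops so that $g \equiv 1 - \disj_m$, and have each party simulate the node states on its side by exchanging $\Theta(nB)$ bits per round, giving $TnB \ge m = \Omega(n^2)$ and hence $TB = \Omega(n)$. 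Your tightness observation via \cref{prop:node-edge-simulation} with $\Delta = n$ also matches the paper's remark.

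Where you genuinely diverge is the upper-bound construction, and yours is simpler. The paper's protocol runs six rounds with a ``relay'' pattern: edge $\{i,j\}$ fetches $I(\{n+1-i,j\})$ at round 2, fetches $I(\{n+1-i,n+1-j\})$ at round 3 via a second hop, checks the witness condition at round 4, and then spends two rounds OR-propagating. Your three-round protocol instead observes up front that the diagonal edge $\{i^{\st}, n+1-j^{\st}\}$ is adjacent to \emph{both} members of any good mirror pair, so a single round-2 aggregation at each edge already detects whether a good pair lies entirely inside its closed neighborhood, and one more OR-propagation round (plus the vertex aggregation) suffices because $L(K_n)$ has diameter $2$. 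Both protocols use $B = O(1)$, so they prove the same statement; your version halves the round count by exploiting the line-graph diameter rather than shuttling values around. One small caution worth adding for completeness: the round-2 update $f_{2,e}$ is not a symmetric update (it must know which element of $M_G(e)$ is which edge label), but the theorem does not require symmetry, so this is fine.
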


\begin{proof}
Let $m := \binom{n/2}{2}$. Let $P = (f_{t,v})_{t,v}$ be a node message-passing protocol on $K_n$ that computes $g$ with $T$ rounds and $B$ bits of memory. We design a two-party communication protocol for $\disj_m$ as follows. Suppose that Alice holds input $X \in \{0,1\}^m$ and Bob holds input $Y \in \{0,1\}^m$. Let us index the edges $\{i,j\} \in E_n$ with $i,j \leq n/2$ by $[m]$, and similarly index the edges $\{i,j\} \in E_n$ with $i,j > n/2$ by $[m]$, in such a way that edge $\{i,j\}$ has the same index as edge $\{n+1-i,n+1-j\}$. Let $I \in \{0,1\}^{E_n}$ be defined by 
\[I(\{i,j\}) := \begin{cases} X_{\{i,j\}} & \text{ if } i,j \leq n/2 \\ Y_{\{i,j\}} & \text{ if } i,j > n/2 \\ 0 & \text{ otherwise } \end{cases}.\]
Initially, Alice computes $\hat P_0(v) := 0$ for all $v \in \{1,\dots,n/2\}$, and Bob computes $\hat P_0(v) := 0$ for all $v \in \{n/2+1,\dots,n\}$. The communication protocol then proceeds in $T$ rounds. At round $t \in [T]$, Alice sends $(\hat P_{t-1}(v))_{1 \leq v \leq n/2}$ to Bob, and Bob sends $(\hat P_{t-1}(v))_{n/2 + 1 \leq v \leq n}$ to Alice. Alice then computes
\[\hat P_t(v) := f_{t,v}((\hat P_{t-1}(v'))_{v' \in [n]}, (I(e))_{e \in M_{K_n}(v)})\]
for each $1 \leq v \leq n/2$, and Bob computes the same for each $n/2 < v \leq n$. Note that for any $i \leq n/2$ and edge $e \in M_{K_n}(i)$, Alice can compute $I(e)$. Similarly, for any $i > n/2$ and edge $e \in M_{K_n}(i)$, Bob can compute $I(e)$. Thus, this computation is well-defined. After round $T$, Alice and Bob output $1-\hat P_T(1)$ and $1-\hat P_T(n)$ respectively.

This defines a communication protocol. Since $\hat P_t(v) \in \{0,1\}^B$ for each $v \in [n]$ and $t \in [T]$, the total number of bits communicated is at most $nBT$. Moreover, by induction it's clear that Alice and Bob output $1-P_T(1;I)$ and $1-P_T(n;I)$ respectively. By assumption that $P$ computes $g$ and the fact that $g(I)_v = 1-\disj_m(X,Y)$ for all $v \in [n]$, we have that $1-P_T(1;I) = 1-P_T(n;I) = 0$ if $\disj_m(I) = 0$, and $1-P_T(1;I) = 1-P_T(n;I) = 1$ if $\disj_m(I) = 1$. Thus, this communication protocol computes $\disj_m$. By \cref{lemma:set-disj}, it follows that $nBT \geq m = \Omega(n^2)$, so $BT = \Omega(n)$ as claimed.

Next, we exhibit an edge message-passing protocol on $K_n$ that computes $g$ with six rounds and one bit of memory. For $1 \leq t \leq 6$ and $e \in E_n$, define $f_{t,e}: \{0,1\}^{M_G(e)} \times \{0,1\} \to \{0,1\}$ as follows:
\begin{align*}
f_{1,\{i,j\}}(x, y) &:= y \\ 
f_{2,\{i,j\}}(x, y) &:= x_{\{n+1-i,j\}} \\ 
f_{3,\{i,j\}}(x, y) &:= x_{\{i,n+1-j\}} \\ 
f_{4,\{i,j\}}(x, y) &:= \mathbbm{1}[y = x_{\{i,j\}} \land i,j \leq n/2] \\ 
f_{5,\{i,j\}}(x, y) &:= \mathbbm{1}[\exists k \in [n]: x_{\{i,k\}} = 1] \\ 
f_{6,\{i,j\}}(x, y) &:= \mathbbm{1}[\exists k \in [n]: x_{\{i,k\}} = 1].
\end{align*}
Also define $\tilde f_v: \{0,1\}^{M_G(v)} \to \{0,1\}$ for each $v \in [n]$ by $\tilde f_v(x) := x_{\{x,1\}}$. It can be checked that the computation of $P$ at timestep $t=6$ is 
\[P_6(\{i,j\};I) := \mathbbm{1}[\exists k,\ell \in [n/2]: I(\{k,\ell\}) = I(\{n+1-k,n+1-\ell\})] = g(I).\]
From the definition of $\tilde f$, it follows that $P$ computes $g$.
\end{proof}

\section{Further details on synthetic task over Ising models} 
\label{a:isingdetails}

\subsection{Background on belief propagation}
A classical way to calculate the marginals $\{\mathbb{E}[x_i]\}$ of an Ising model, when the associated graph is a tree, is to iterate the message passing algorithm: 
\begin{equation}
    \nu^{(t+1)}_{i \to j} = \tanh \left(
    h_i + 
    \sum_{k \in \partial_i \backslash j}
    \tanh^{-1} \left(\tanh(J_{ik} )\nu_{k \to i}^{(t)}\right)
    \right)
    \label{eq:bpupdates}
\end{equation} 
When the graph is a tree, it is a classical result (\citep{mezard2009information}, Theorem 14.1) that the above message-passing algorithm converge to values $\nu^*$ that yield the correct marginals, namely: \[\mathbb{E}[x_i] = \tanh\left(h_i + \sum_{k \in \partial_i}
    \tanh^{-1}\left(\tanh(J_{ik}) \nu^*_{k\to i}\right)\right).\]

The reason the updates converge to the correct values on a tree topology is that they implicitly simulate a dynamic program. Namely, we can write down a recursive formula for the marginal of node $i$ which depends on sums spanning each of the subtrees of the neighbors of $i$ (i.e., for each neighbor $j$, the subgraph containing $j$ that we would  get if we removed edge $\{i,j\}$). 

If we root the tree at an arbitrary node $r$, we can see that after completing a round of message passing from the leaves to the root, and another from the root to the leaves, each subtree of $i$ will be (inductively) calculated correctly.     

Moreover, even though the updates \eqref{eq:bpupdates} are written over edges, the dynamic programming view makes it clear an equivalent message-passing scheme can be written down where states are maintained over the \emph{nodes} in the graph. Namely, for each node $v$, we can maintain two values $h_{v,\mbox{down}}$ and $h_{v,\mbox{up}}$, which correspond to the values that will be used when $v$ sends a message upwards (towards the root) or downwards (away from the root). Then, for appropriately defined functions $F,G$ (depending on the potentials $J$ and $h$), one can ``simulate'' the updates in \eqref{eq:bpupdates}: \begin{align}
    h^{(t+1)}_{v,\mbox{up}} &\leftarrow F\left(\{h^{(t)}_{w, \mbox{up}}: w \in v \cup \mbox{Children}(v)\}\right) \label{eq:nodedir1}\\ 
    h^{(t+1)}_{v, \mbox{down}} &\leftarrow G\left(h^{(t)}_{\mbox{Parent}(v), \mbox{down}}, \left\{h^{(t)}_{w, \mbox{up}}\right\}_{w \in \mbox{Children}(v)}\right) \label{eq:nodedir2}
\end{align}       

Intuitively, $h_{v,\mbox{up}}$ captures the effective external field induced by the subtree rooted at $v$ on $\mbox{Parent}(v)$. After the upward messages propagate, the root $r$ can compute its correct marginal. Once $h_{\mbox{Parent}(v),\mbox{down}}$ is the correct marginal for $\mbox{Parent}(v)$ at some step, $h_{v,\mbox{down}}$ will be the correct marginal for $v$ at all subsequent steps.

\subsection{GCN-based architectures to calculate marginals}   
\label{s:archsising}
The belief-propagation updates \eqref{eq:bpupdates} naturally fit the general edge-message passing paradigm from \eqref{eq:edgempnn}. In fact, they fit even more closely a ``directed'' version of the paradigm, in which each edge $\{i,j\}$ maintains two embeddings $h_{i \to j}, h_{j \to i}$, such that the embedding for direction $h_{i \to j}$ depends on the embeddings $\{h_{k \to i}\}_{\{k,i\} \in E}$. With this modification to the standard edge GCN architecture \cref{eq:edgegcn}, it is straightforward to implement \eqref{eq:bpupdates} with one layer, using a particular choice of activation functions and weight matrices $W$ (since, in particular, in our dataset all edge potentials $J_{i,j}$ are set to 1). 
Similarly, with a directed version of the node GCN architecture \cref{eq:nodegcn}, where each node maintains an ``up'' embedding as well as a ``down'' embedding, it is straightforward to implement the ``node-based'' dynamic programming solution \eqref{eq:nodedir1}-\eqref{eq:nodedir2}.

We call the architectures that do not maintain directionality Node-U and Edge-U (depending on whether they use a node-based or edge-based GCN). We call the ``directed'' architectures Node-D and Edge-D respectively. Since there are only initial node features (input as node potentials $\{h_i\}_{i \in }$), for the edge based architectures we initialize the edge features as a concatenation of the node features of the endpoints of the edge. The results we report for each architecture are the best over a sweep of  
depth $\in \{5, 10, 15, 20, 25, 30\}$ and width  $\in \{10, 32, 64\}$. 
 
\subsection{Edge-based models improve over node-based models} 

In Figure~\ref{fig:isingexps} we show the results for several tree topologies: a complete binary tree (of size 31), a path graph (of size 30), and uniformly randomly chosen trees of size 30 (the results in Figure~\ref{fig:isingexps} are averaged over 3 samples of tree). The architectures in the legend (Node-U, Edge-U, Node-D, Edge-D) are based on a standard GCN, and detailed in Section~\ref{s:archsising}

We can see that for both the undirected and directed versions, adding edge embeddings improves performance. The improved performance of all directed versions compared to their undirected counterpart is not very surprising: the standard, undirected GCN architecture treats all neighbors symmetrically --- hence, the directed versions can more easily simulate something akin to the belief propagation updates \eqref{eq:bpupdates} as well as the node-based dynamic programming \eqref{eq:nodedir1}-\eqref{eq:nodedir2}.


\begin{figure*}[t]
    \centering
    \includegraphics[width=\linewidth]{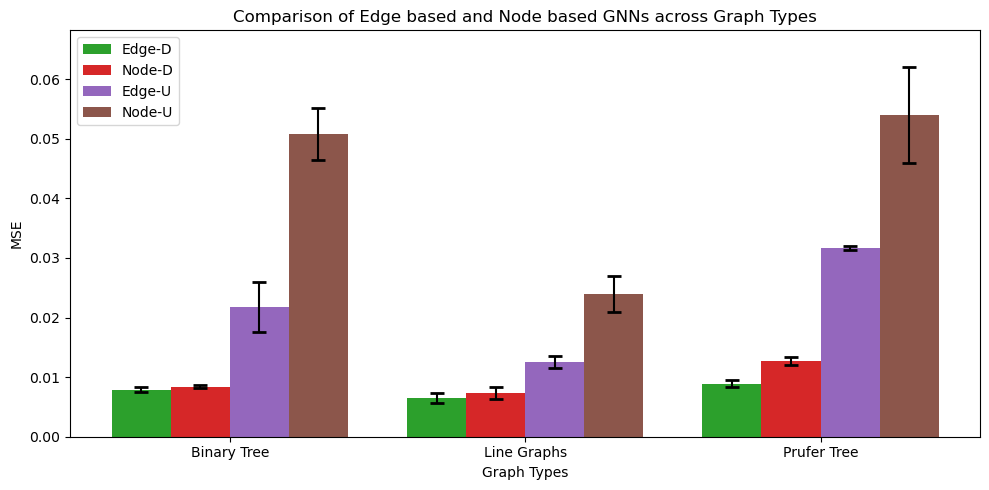}
    \caption{Comparison of four architectures for calculating node marginals in an Ising model. The architectures considered are node-embedding \eqref{eq:nodegcn} and edge-embedding \eqref{eq:edgegcn} versions of a GCN (correspondingly labeled Node-U and Edge-U), as well as their ``directed'' counterparts, as described in Section~\ref{s:archsising}, correspondingly labeled Node-D and Edge-D. The x-axis groups results according to the topology of the graph, the y-axis is MSE (lower is better). 
    The mean and variances are reported over $3$ runs for the best choice of depth and width over the sweep described in Section~\ref{s:archsising}.} 
    \label{fig:isingexps}
\end{figure*}

\newpage

\end{document}